\makeatletter\@input{supp.aux.tex}\makeatother
\newtheorem{theorem}{Theorem}
\newtheorem{assumption}{Assumption}
\newcommand{\ours}{RelEx}
\newcommand{\lonenorm}{$\ell_1$-norm}
\newcommand{\ltwonorm}{$\ell_2$-norm}
\newcommand{\linfnorm}{$\ell_\infty$-norm}
\newcommand{\supp}{the supp.}
\begin{document}

\title{
    Building Reliable Explanations of Unreliable Neural Networks: Locally Smoothing Perspective of Model Interpretation
}

\author{
    {Dohun Lim}
    \qquad{Hyeonseok Lee}
    \qquad{Sungchan Kim\thanks{Correspondence to: Sungchan Kim ({\tt{s.kim@jbnu.ac.kr}}).}}\\
    Division of Computer Science and Engineering, Jeonbuk National University, Korea\\
}

\maketitle
\thispagestyle{empty}

\begin{abstract}

    We present a novel method for reliably explaining the predictions of neural networks.
    We consider an explanation reliable if it identifies input features relevant to the model output by considering the input and the neighboring data points.
    Our method is built on top of the assumption of smooth landscape in a loss function of the model prediction: locally consistent loss and gradient profile.
    A theoretical analysis established in this study suggests that those locally smooth model explanations are learned using a batch of noisy copies of the input with the L1 regularization for a saliency map.
    Extensive experiments support the analysis results, revealing that the proposed saliency maps retrieve the original classes of adversarial examples crafted against both naturally and adversarially trained models, significantly outperforming previous methods.
    We further demonstrated that such good performance results from the learning capability of this method to identify input features that are truly relevant to the model output of the input and the neighboring data points, fulfilling the requirements of a reliable explanation.

\end{abstract}

\section{Introduction test}

\begin{figure*}[t]
    \begin{center}
        \includegraphics[width=1.0\linewidth]{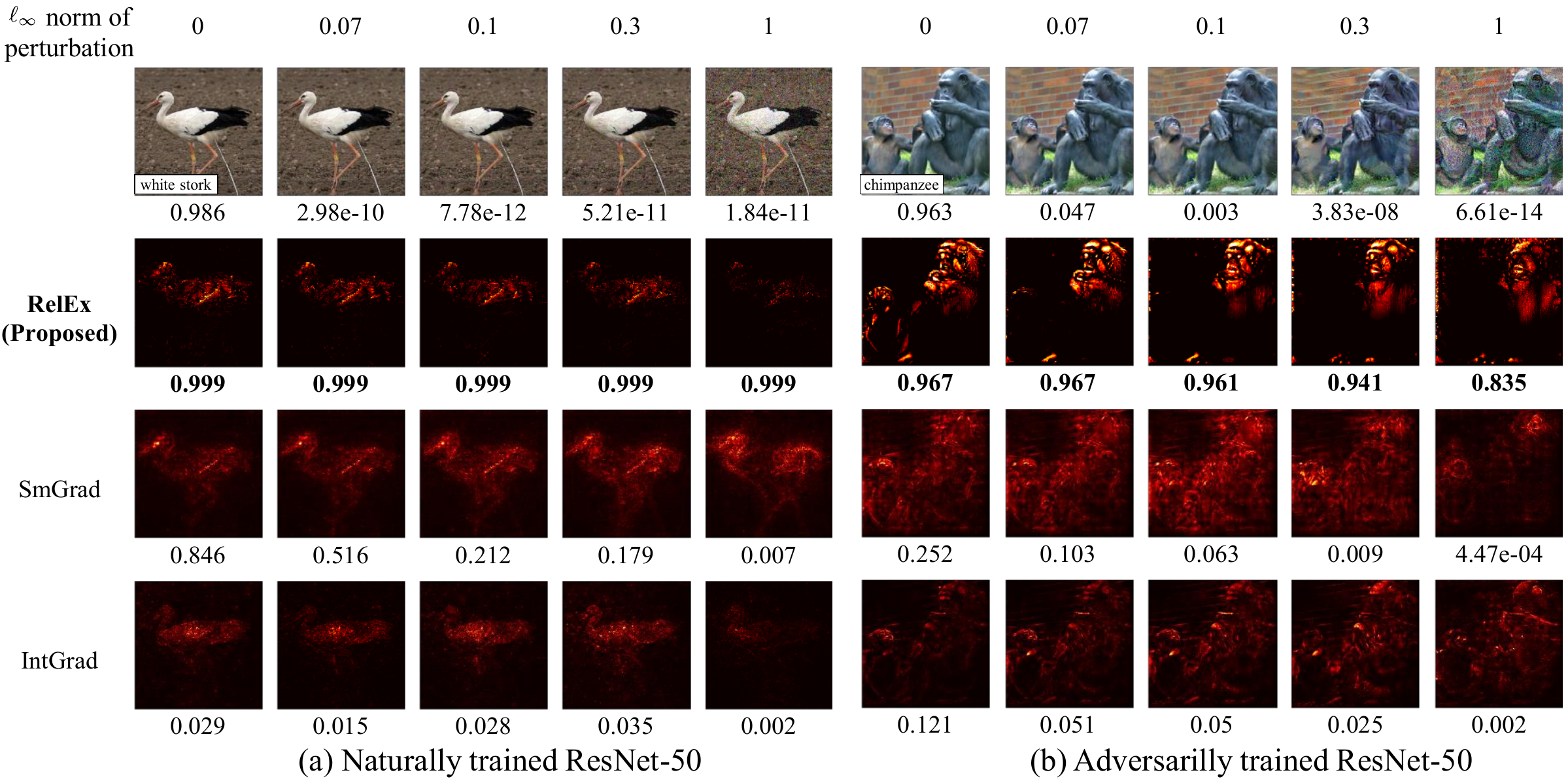}
    \end{center}
    \vspace*{-5mm}
    \caption{
        \textbf{Comparison of saliency maps.}
        Images on the top row depict adversarial examples created by a PGD attack~\cite{madry2018towards} for given perturbation distance in the \lonenorm~on (a) naturally trained and (b) adversarially trained ResNet-50.
        Numbers below images represent its softmax scores while ones below saliency maps indicate the scores of corresponding explanations.
        Our method generates saliency maps that leads to the scores close to 1 consistently in the presence of perturbations.
        The saliency maps of other methods, SmGrad~\cite{smilkov2017smoothgrad} and IntGrad~\cite{sundararajan2017axiomatic}, are visually plausible but irrelevant to the score.
        (See \supp~\ref{supp_qualitative_results} for more results)
    }
    \vspace*{-2mm}
    \label{fig_key_results}
\end{figure*}

The recent progress of deep neural networks has led to their adoption in various decision-critical applications, including medical, finance, and legal fields and autonomous vehicles.
However, the high modeling capacity of deep models renders the inner operations of the models mostly uninterpretable and demands human-understandable explanations for model predictions.
For this purpose, the model output attribution for input features is a popular idea.
The attribution aims to identify the importance of input features using end-to-end relationships between inputs and model predictions.
\black{We use a \emph{saliency map} as the visual form of an \emph{explanation}.}
A saliency map is a common approach to visual tasks to implement pixel-level or regional attributions for a given image~\cite{springenberg2014striving, selvaraju2017grad, fong2017interpretable, wagner2019interpretable, kapishnikov2019xrai, rebuffi2020there}.

The susceptibility of neural networks can cause false predictions for a given imperceptibly modified image~\cite{goodfellow2014explaining}, which has emerged as a new challenge in explaining model predictions~\cite{fong2017interpretable, dabkowski2017real, wagner2019interpretable, ghorbani2019interpretation, dombrowski2019explanations, heo2019fooling, slack2020fooling, subramanya2019fooling}.
For instance, recent work has demonstrated that \black{input can be manipulated, resulting in different saliency maps} without damaging the classification accuracy~\cite{ghorbani2019interpretation, dombrowski2019explanations, heo2019fooling, subramanya2019fooling}.
Such a false explanation is primarily due to the fragility of learned models that have highly nonsmooth decision boundaries rather than due to the explanation methods.
Although adversarial training has addressed these concerns~\cite{madry2018towards, zhang2019theoretically, schmidt2018adversarially, fawzi2018adversarial, gilmer2018adversarial, tsipras2018robustness, moosavi2019robustness}, it is not always applicable and still incomplete.

This discussion indicates the need to build a reliable model explanation with two requirements \black{if the goal is \emph{to recover input features that are important in a local neighborhood}:}
first, a reliable explanation method should be robust so that it generates consistent explanations along with neighboring (and thus similar) data points;
second, explanations generated by the method must have high fidelity for model predictions.

We propose \ours, a novel method to \emph{reliably} explain predictions of neural network-based classifiers.
\ours~aims to generate \emph{robust} and yet \emph{accurate} saliency maps of pixel-level importance for a given image.
Inspired by recent work on adversarial training~\cite{melis2018towards, madry2018towards, moosavi2019robustness}, we built \ours~on top of an assumption on a \emph{locally smooth explanation} for the vicinity of the input.

Although substantial work has proposed creating saliency maps based on gradient~\cite{simonyan2013deep, sundararajan2017axiomatic, smilkov2017smoothgrad, bach2015pixel, shrikumar2017learning, selvaraju2017grad} or perturbation~\cite{Petsiuk2018rise, fong2017interpretable, wagner2019interpretable, kapishnikov2019xrai, rebuffi2020there}, researchers have hardly addressed both robustness and accuracy in a single method.
Existing methods often \black{fail to find out essential features of the input and neighborhood for the model predictions} despite being visually plausible, as shown in this study.
Moreover, using either a random or adversarial perturbation of data points can manipulate their explanations~\cite{ghorbani2019interpretation, dombrowski2019explanations}.
Recent work has addressed such an issue partially~\cite{dabkowski2017real, fong2017interpretable, wagner2019interpretable, dombrowski2019explanations}.
In contrast to the existing methods, we construct an analysis to characterize the behavior of the proposed explanation method based on the assumption of the local explanation.
Specifically, the contributions of this paper are as follows.

\begin{itemize}
    \item
          We establish a quadratic approximation on a locally smooth landscape from the model explanation perspective to identify a trade-off between the accuracy and robustness of saliency maps.
          Our analysis reveals that the robustness of an explanation method is better achieved at the cost of the reduced accuracy of the model explanation and that the curvature of a loss function for learning a saliency map is inverse-proportional to the \lonenorm~of saliency maps to which, however, the explanation accuracy is proportional.
          A similar trace-off was investigated in adversarial training~\cite{tsipras2018robustness, zhang2019theoretically}; but it was hardly addressed in the context of an explanation method.
          Although the use of the \lonenorm~is not new this work is the first attempt to address its effects on building a reliable explanation method to the best of our knowledge.
    \item
          Our analysis leads to an easy-to-implement objective function to learn a saliency map by using backpropagation.
          We need only noisy copies of an input image as a batch for the optimization that is regularized by the \lonenorm~of a saliency map.
    \item
          \ours~\black{identifies input features relevant to a decision for all points in a neighborhood over the existing methods} when applying it to naturally and adversarially trained models.
          We demonstrate that explanations by the proposed method achieved a remarkably robust retrieval of the target classes from adversarial examples created via strong white-box attacks (Figure~\ref{fig_key_results}).
          Extensive evaluations indicate that such an advantage our method is due to learning appropriate saliency maps even with severe perturbations.
\end{itemize}

\section{Related Work}

We briefly review previous work related to the explanation of neural networks to highlight the benefit of this approach.
We first focus on methods to generate saliency maps and then on the literature on robust model explanations.

\textbf{Gradient-based methods.}
Existing explanation methods generate the importance of input features primarily based on the gradient or perturbation of an input image.
Gradient-based approaches measure individual feature importance as the sensitivity of input features concerning changes in the model prediction by using standard backpropagation~\cite{simonyan2013deep, smilkov2017smoothgrad, sundararajan2017axiomatic, bach2015pixel, selvaraju2017grad}.
The pixel-level gradient has inherent limitations, such as being noisy and saturation, to explain the model output~\cite{bach2015pixel, smilkov2017smoothgrad, sundararajan2017axiomatic, shrikumar2017learning}.
A method, called SmoothGrad, generates explanations by averaging gradient-based saliency maps of noisy copies of an input image~\cite{smilkov2017smoothgrad}, partially addressing adversarial attacks~\cite{heo2019fooling, dombrowski2019explanations}.
An approach in~\cite{sundararajan2017axiomatic} takes integrated gradients along with a straight path from the baseline value to each of the input features as an attribution of the particular feature.
Layer-wise relevance propagation and DeepLIFT back-propagate the model output by distributing it through a neural network according to neuronal activation~\cite{bach2015pixel, shrikumar2017learning}.
Although these principled approaches provide improved saliency maps, one can manipulate input images for the model to classify the images correctly but result in their saliency maps differently from original ones~\cite{ghorbani2019interpretation, dombrowski2019explanations, heo2019fooling, slack2020fooling, subramanya2019fooling}.

To summarize, because these methods merely react to the interactions of the model with the input and thus are unsupervised processes in nature, they are restricted to presenting the reaction of the model to the change.

\textbf{Perturbation- and activation-based methods.}
Another approach generates saliency maps as a change in the model output caused by perturbing the input image~\cite{dabkowski2017real, fong2017interpretable, Petsiuk2018rise, wagner2019interpretable, kapishnikov2019xrai, rebuffi2020there}.
These methods learn feature importance of an input at the pixel-level~\cite{wagner2019interpretable} like the proposed or at the regional basis ~\cite{dabkowski2017real, fong2017interpretable, Petsiuk2018rise, kapishnikov2019xrai, rebuffi2020there}.
Perturbation, such as occlusion and masks, is queried to an image repeatedly to learn an optimal saliency map.
Some incorporated regularizers for adversarial defense, unlike gradient-based methods~\cite{dabkowski2017real, fong2017interpretable, wagner2019interpretable}.
We demonstrate that these defenses are insufficient and can be deceived by carefully crafted adversaries.

Activation-based approaches combine activations of convolutional layers linearly to translate the spatial information of features maps in different layers to saliency maps~\cite{selvaraju2017grad, dabkowski2017real, rebuffi2020there}.
However, their saliency maps are diffused and are unreliable against the perturbation of input images.

\textbf{Robust explanation.}
Recent work has indicated that the susceptibility of neural networks is caused by a high modeling capacity with numerous parameters and a kinky landscape of the gradient of the model output due to the nonlinearity of the models~\cite{ghorbani2019interpretation, dombrowski2019explanations}.
The adversarial training of models incorporating robustness into the model during the training process is an active research area~\cite{ross2017improving, fawzi2018empirical, qin2019adversarial, moosavi2019robustness}.
Their main idea is to encourage robustness by enforcing a locally linear landscape of the model prediction for neighboring examples~\cite{shaham2015understanding, ross2017improving, moosavi2019robustness}.

We regard the notion of a locally smoothing behavior as a requirement of a reliable explanation.
Thus, explanations with saliency maps for the vicinity of an input should be similar and lead to the same class.
The authors in \cite{dombrowski2019explanations} demonstrated that piece-wise nonlinearity due to a rectified linear unit (ReLU) of neural networks might mislead to a wrong explanation, even with a correct prediction.
They proposed to use SoftPlus instead of the ReLU in the model, which is identical to using SmoothGrad~\cite{smilkov2017smoothgrad}, providing partial tolerance to an adversarial attack~\cite{dombrowski2019explanations, heo2019fooling}.
An analysis established in~\cite{wagner2019interpretable} indicated that the robustness of the model is degraded by numerous parameters and the largest singular value of the Hessian regarding the parameters that represents the curvature of the gradient landscape.
Despite previous efforts, no tangible realization of a robust explanation method has been addressed.
Some effort has been made to improve the quality of saliency maps, considering adversarial defense~\cite{dabkowski2017real, fong2017interpretable, wagner2019interpretable, dombrowski2019explanations} by
reducing a total variation~\cite{dabkowski2017real, fong2017interpretable} or modifying the activation function of neurons~\cite{wagner2019interpretable, dombrowski2019explanations}.
They, however, result in insufficient defense capability.
\ours~exploits the analysis established in this work to address the limitations above efficiently and effectively, and non-intrusively through a simple optimization framework.

\section{Proposed Approach}

This section describes the details of \ours.
We begin by stating two assumptions that a reliable explanation method should satisfy.
Then, we analyze the bounds on the robustness of the proposed method in response to these requirements in Section~\ref{subsec_local_smoothness_analysis}.
A cost function of \ours~for learning a saliency map is formulated in Section~\ref{subsec_optimization}.

\textbf{Notations.}
A neural network-based classification model maps an input image $x_0 \in \mathbbm{R}^d$ to an output $y \in [0,1]^{|C|}$, where $y$ is a vector representing the softmax scores of a specified set of classes $C$.
We define $f_c(x_0)$ as the probability of $x_0$ being classified as $c \in C$.
A \emph{saliency map} $m_c \in [0,1]^d$ represents the importance of individual features (i.e., pixels of $x_0$) corresponding to the model prediction $f_c (x_0)$.
For simplicity, we use $m$ and $f(x_0)$ instead of $m_{c_T}$ and $f_{c_T}(x_0)$, respectively, when referring to the target class $c_T$ of $x_0$.

\subsection{Local Smoothness for Robust Explanation} \label{subsec_local_smoothness_analysis}

\black{Suppose a local interpretation of the model prediction refers to identify features relevant to $x_0$ and its local neighborhood.
    In that case, the interpretation} implies that explanations of data samples neighboring to $x_0$ should vary slowly, rendering a corresponding smooth landscape although it holds locally.
\black{According to the notion of the local interpretation}, we expect data points in the vicinity of $x_0$ to be in the same class with similar saliency maps.
We claim that, in particular, two conditions should be met in the local data points for the model explanation to be reliable by ensuring the local smoothness.

\begin{assumption} [\textbf{Locally consistent model prediction}] \label{assumption_label_robustness}
    For a given saliency map $m$ of input $x_0$ with a target class $c_T$, we consider data samples $\mathcal{D} = \{ x_i \}$ where $ \Vert x_i - x_0 \Vert_{p} \leqq \epsilon$ and $\epsilon$ is a small positive number, which we regard as the perturbation of $x_0$.
    Thus, $m \odot x_i$ for data points $x_i \in \mathcal{D}$ concerning $m$ would be correctly classified as $c_T$ (i.e., $ f(m \odot x_i) \approx f(m \odot x_0) $)\black{, where $\odot$ is an element-wise product of vectors.}
    We use the \ltwonorm~ or \linfnorm~as the perturbation distance.\footnote{We represent the \ltwonorm~of vector $a$ as $ \Vert a \Vert $.}
\end{assumption}

\begin{assumption} [\textbf{Locally consistent saliency maps}] \label{assumption_saliency_robustness}
    A saliency map $m^{(x_i)}$ for data point $x_i \in \mathcal{D}$ is similar to that of $x_0$, $m$, which leads to $ \Vert m - m^{(x_i)} \Vert \leqq \delta$ for a small positive number $\delta$.
\end{assumption}

\textbf{Local smoothness for label consistency.}
We consider a simple analysis to elucidate how a saliency map is related to the label consistency of an explanation method.
As the distance between the data points depends on the task, we use cross-entropy in this study.
Thus, given an input $x$ and a saliency map $m$, we denote a loss function of classifying $m \odot x$ as its target class $c_T$ as $L(x, m) = -\log f(m \odot x)$.
Inspired by the setting of the analysis established in~\cite{moosavi2019robustness}, we assume that $L(\cdot)$ is well approximated as a quadratic form at a sufficiently small distance $\gamma$, which is given by the following:
\begin{equation}\label{eq_loss_robust}
    L( x_0 + \gamma , m ) \approx L( x_0 , m ) + \nabla L ( x_0 , m )^T \gamma + \frac{1}{2} \gamma^T H \gamma
\end{equation}
where $\nabla L( x_0 , m )$ and $H$ denote the gradient and Hessian of $L(\cdot)$ at $ x_0 $, respectively.
The robustness of the explanation method is represented by $\gamma$ if we ensure that all samples in the $L_2$ ball of radius $\gamma$ centered at $x_0$ are classified correctly.
The second-order derivative term in Eq.~\eqref{eq_loss_robust} enables the investigation of the divergent curvature of the loss function from the perspective of the optimization landscape.

If all data points in the $L_2$ ball are classified correctly, it holds that $ L( x_0 + \gamma , m ) \leqq \tau $ and $ L( x_0 , m ) \leqq \tau $ where $\tau$ is the threshold for the input to be classified correctly.
For example, in the case of binary classification, $ \tau = -\log {\frac{1}{2}}$.
We measure the robustness $\gamma$ by evaluating its maximum $\gamma^+$ as follows:
\begin{equation}\label{eq_gamma_argmax}
    \gamma^+ = \argmax_{\gamma} \Vert \gamma \Vert~\text{s.t.}~L( x_0 + \gamma , m ) \leqq \tau .
\end{equation}
For the ease of calculation, we convert Eq.~\eqref{eq_gamma_argmax} to an equivalent minimization problem as follows:
\begin{equation}\label{eq_gamma_argmin}
    \gamma^+ = \argmin_{\gamma} \Vert \gamma \Vert~\text{s.t.}~L( x_0 + \gamma , m ) \geqq \tau .
\end{equation}
Let $ c = \tau - L ( x_0, m ) \geqq 0 $. Substituting $c$ into Eq.~\eqref{eq_gamma_argmin} yields
\begin{equation}\label{eq_gamma_argmin_2}
    \gamma^+ = \argmin_{\gamma} \Vert \gamma \Vert~\text{s.t.}~\nabla L ( x_0 , m )^T \gamma + \frac{1}{2} \gamma^T H \gamma \geqq c.
\end{equation}
The introduction of Eq.~\eqref{eq_gamma_argmin_2} is not to calculate $\gamma^+$ but to derive the lower bound of the robustness and the upper bound of the loss function as a function of $m$.
This analysis also holds when we use the \linfnorm~for the perturbation distance metric.

\begin{theorem}[\textbf{Local explanations with respect to label consistency}] \label{theorem_cls_robust}
    Let $\gamma = \alpha \cdot v$ where $ \Vert \gamma \Vert  = \alpha$ and $ \Vert v \Vert  = 1$.
    Let $\mathcal{D} = \{x_i\}$ be a set of data samples where $ \Vert x_i - x_0 \Vert  \leqq \epsilon$.
    For a given saliency map $m$ calculated from $x_0$, it holds that
    \black{
        \begin{equation}\label{eq_alpha_bound}
            \alpha \geqq \frac{c}{ \Vert m \Vert  _1} \cdot \frac{ 2 } {  \Vert  -g(x_0 + \alpha v) + g(x_0)  \Vert   + 2  \Vert  g(x_0)  \Vert  }
        \end{equation}
    }
    where
    \begin{align} \label{eq_g_x}
        g(x) = -\nabla L (x, m) = \nabla \log f(m\odot x).
    \end{align}
    It also holds that the loss function in Eq.~\eqref{eq_loss_robust} with respect to $x_0 + \gamma$ is upper-bounded as follows:
    \begin{equation} \label{eq_classification_loss_bound}
        \begin{split}
            &L(x_0 + \gamma, m ) \leqq \\
            &\alpha  \Vert m \Vert _1 \left( \frac{ \Vert -g(x_0 + \alpha v) + g(x_0) \Vert }{2}  + \Vert  g(x_0)  \Vert  \right).
        \end{split}
    \end{equation}
\end{theorem}

Theorem~\ref{theorem_cls_robust} has two implications.
First, the label consistency of the proposed explanation method is inversely proportional to the complexity of the saliency map that is represented  as $ \Vert m \Vert _1$ in Eq.~\eqref{eq_alpha_bound}.
Intuitively, images far from $x_0$ can be viewed as significantly perturbed copies of $x_0$.
Thus, a robust explanation applicable along with these data points is likely to contain a small number of features that are invariant to the perturbation.
Previous work on adversarial training has addressed this issue to explain the trade-off between robustness and accuracy, arguing that the high prediction performance of models is due to subtle features that are susceptible to perturbation~\cite{tsipras2018robustness}.
Our analysis reveals similar concerns in the context of the model explanation with the explanation complexity notion.
Figure~\ref{fig_key_results} illustrates two examples where saliency maps become more sparse (i.e., decreasing $ \Vert m \Vert _1$) to maintain the classification performance of the explanations as the images undergo further perturbations.
We further validate this observation using extensive experiments in Section~\ref{subsec_eval_label_robustness}.

Second, in contrast to the behavior of the robustness in the explanation, the classification accuracy for the neighboring data points is proportional to $||m||_1$ as shown in Eq.~\eqref{eq_classification_loss_bound}.
This equation is another representation of Eq.~\eqref{eq_alpha_bound} that agrees with the aforementioned trade-off of adversarial training.

\begin{figure}[t]
    \begin{center}
        \includegraphics[width=0.7\linewidth]{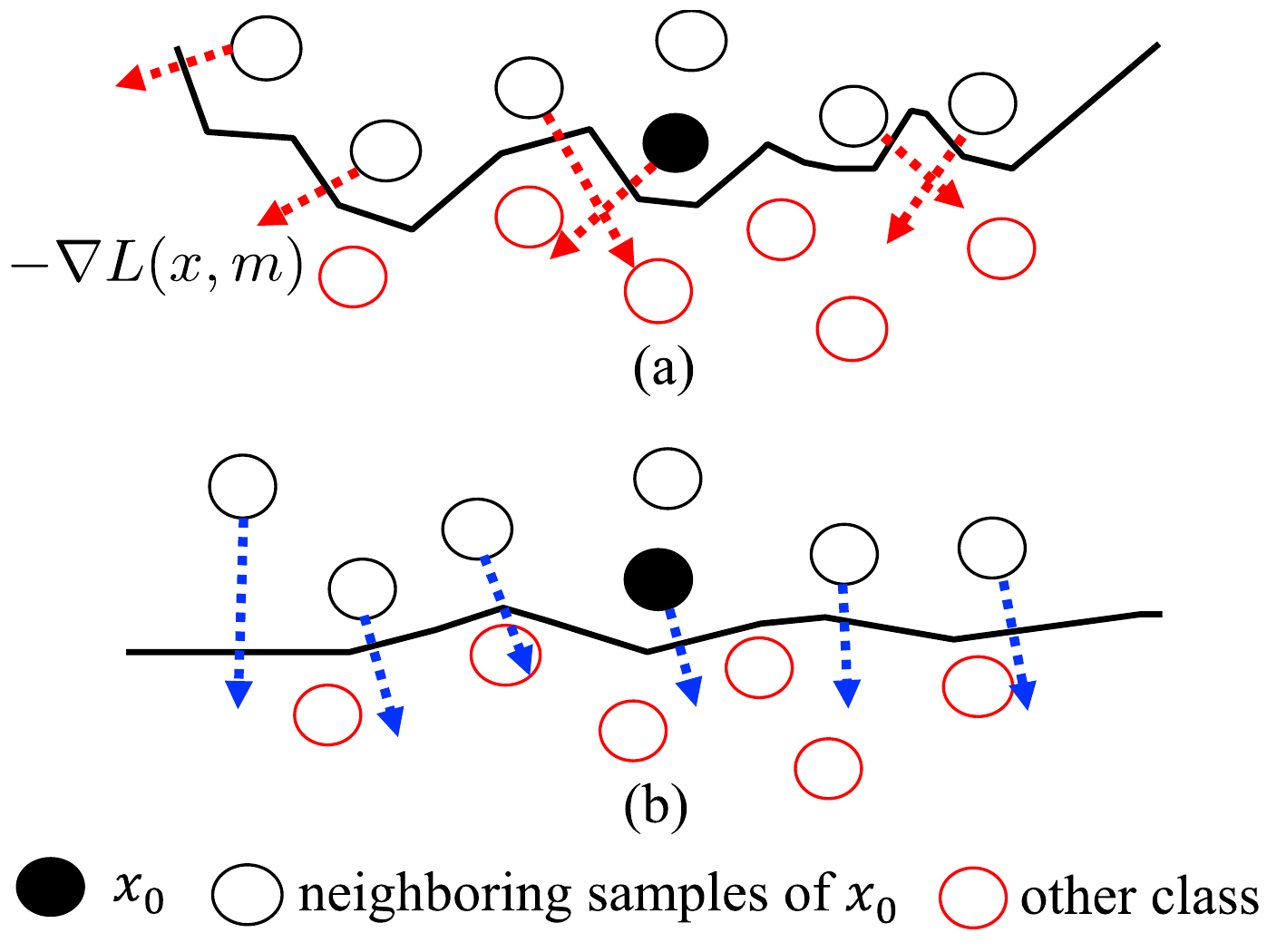}
    \end{center}
    \vspace*{-5mm}
    \caption{
        \textbf{Kinky gradient landscape.}
        Two landscapes on the classification loss lead to the identical classification accuracy.
        Given that a saliency map is dependent on the negative gradient of the loss function, the saliency map varies according to the landscape geometry.
        (a) Landscape is kinky.
        This results in divergent gradient profile of the neighboring data points and, thus, their inconsistent saliency maps, violating Assumption~\ref{assumption_saliency_robustness}.
        (b) Ensuring a smooth landscape provides similar saliency maps for the data points.
    }
    \vspace*{-2mm}
    \label{fig_kinky_gradient}
\end{figure}

\textbf{Local smoothness for consistency of saliency maps.}
We demonstrated the effects of the explanation complexity given by $ \Vert m \Vert _1$ on the robustness in terms of the classification accuracy.
However, it is still unclear how the explanation complexity affects the saliency map consistency among the adjacent data points.
Figure~\ref{fig_kinky_gradient} motivates this investigation, where two landscapes in the classification loss have identical accuracy but the explanations of the data points at the landscapes are quite different.

Given the assumption of the local consistency in model prediction, it holds that $f(m^{(x_i)} \odot x_i) \approx f(m \odot x_0)$, and, thus, $L(x_i, m^{(x_i)}) \approx L(x_0, m)$, where $m^{(x_i)}$ and $m$ are saliency maps for $x_i$ and $x_0$, respectively.
Then, it is possible to approximate $L(x_i, m^{(x_i)})$ using $L(x_0, m)$.
The following result reveals the relationship between $ \Vert m \Vert _1$ and the saliency map consistency along with the data points around the input data.

\begin{theorem}[\textbf{Local explanations with respect to saliency map consistency}] \label{theorem_sal_robust}
    Let $\mathcal{D} = \{ x_i \}$ be the vicinity of the input data $x_0$ such that $ \Vert x_i - x_0 \Vert  \leqq \epsilon$ where $\epsilon$ being a small positive number.
    Then, the distance between the gradients of explanations of $x_i$ and $x_0$ is lower-bounded as follows:
    \begin{equation} \label{eq_saliency_similarity_upper_bound}
        \Vert  \nabla L(x_i, m ) - \nabla L (x_0, m )  \Vert  \leqq  \Vert  m  \Vert _1 \cdot  \Vert  -g(x_0 + \alpha v) + g(x_0)  \Vert .
    \end{equation}
\end{theorem}

The distance between corresponding gradients represents similarity of saliency maps by referring to Eq.~\eqref{eq_g_x}.
Therefore, Theorem~\ref{theorem_sal_robust} indicates that the proposed method prefers a smaller value of $ \Vert m  \Vert _1$ for the saliency maps to be consistent, as is the classification robustness case.

The results of Theorem~\ref{theorem_cls_robust} suggest formulating \ours~as an optimization problem to consider the trade-off represented by Eq.~\eqref{eq_alpha_bound} and~\eqref{eq_classification_loss_bound}.
We denote an objective function $\mathcal{J} (\cdot)$ for the data points in the $\ell_p$ ball centered at the input image $x_0$ as follows:
\begin{equation}\label{eq_objective_function}
    \mathcal{J} ( \mathcal{D}, m ) = - \frac{1}{|\mathcal{D}|} \sum_i \log  f ( m \odot x_i).
\end{equation}
\ours~learns a saliency map $m^+$ given $\mathcal{D} = \{x_i\}$,
\begin{equation}\label{eq_base_optimization}
    m^+ = \argmin_{m} \mathcal{J}( \mathcal{D}, m ) + \lambda_1  \Vert m \Vert _1
\end{equation}
where $\lambda_1$ a regularization strength of $ \Vert m \Vert _1$ following the results of Eq.~\eqref{eq_alpha_bound} and~\eqref{eq_saliency_similarity_upper_bound}.
The use of $ \Vert m \Vert _1$ coincides with previous work that generated perceptually improved images~\cite{isola2017image, zhu2017unpaired, wang2018high}, which also applies to the proposed method.
The previous explanation methods have also learned a saliency map using $ \Vert m \Vert _1$ like our method~\cite{fong2017interpretable, wagner2019interpretable, dabkowski2017real}.
However, we demonstrated that using a batch $\mathcal{D}$ in conjunction with the regularizer increases the robustness considerably.
See \supp~\ref{supp_proofs} for proofs of the theorems.

\subsection{Optimization of the Proposed Method} \label{subsec_optimization}

\textbf{Fidelity of saliency maps and faithful explanations.}
Another critical aspect of a reliable explanation method is that a saliency map should represent \emph{essential regions} of the input~\cite{fong2017interpretable, melis2018towards, dabkowski2017real, wagner2019interpretable}.
While it is difficult to quantify the fidelity of a saliency map in general, we consider two definitions: \emph{the smallest susceptive region} and \emph{smallest evidential region}~\cite{fong2017interpretable, dabkowski2017real, chang2018explaining, du2018towards}.
The smallest susceptive region is the minimum area of an image that changes the model prediction when the region is altered.
The smallest evidential region refers to the minimum area to be preserved to maintain the model prediction.
Although these concepts appear similar, they are different, for example, in an image classified as ``dog'' containing two dogs.
From the smallest evidential region viewpoint, as long as the model explanation covers any of the dogs, it may lead to correct classification.
However, the explanation is not the smallest susceptive region because the part untapped by the explanation still has information concerning the target class.
Determining the smallest susceptive region can be viewed as identifying a background.

The objective function in Eq.~\eqref{eq_base_optimization} is likely to determine the smallest evidential region because it is advantageous in terms of reducing $||m||_1$.
Inspired by the above discussion, we incorporated an additional regularization term into the objective function to improve our explanation in the smallest susceptive region.
Given the batch of $\mathcal{D}$, we considered an additional loss, $\mathcal{B} ( \mathcal{D}, m )$, a classification loss for the counterpart region in $x_0$ with respect to $m$.
Thus, the objective function is given by the following:
\begin{equation} \label{eq_full_optimization}
    \begin{split}
        \mathcal{B} ( \mathcal{D}, m ) = - \frac{1}{|D|} \sum_{ x_i \in \mathcal{D} } \log \left( 1- f ((1-m) \odot x_i) \right), \\
        m^+ = \argmin_{m} \mathcal{J} ( \mathcal{D}, m ) + \lambda_1 ||m||_1 + \lambda_2 \mathcal{B} ( \mathcal{D}, m )
    \end{split}
\end{equation}
where $\lambda_2$ is the coefficient of $\mathcal{B} ( \mathcal{D}, m )$.

\begin{figure*}[t]
    \begin{center}
        \includegraphics[width=0.95\linewidth]{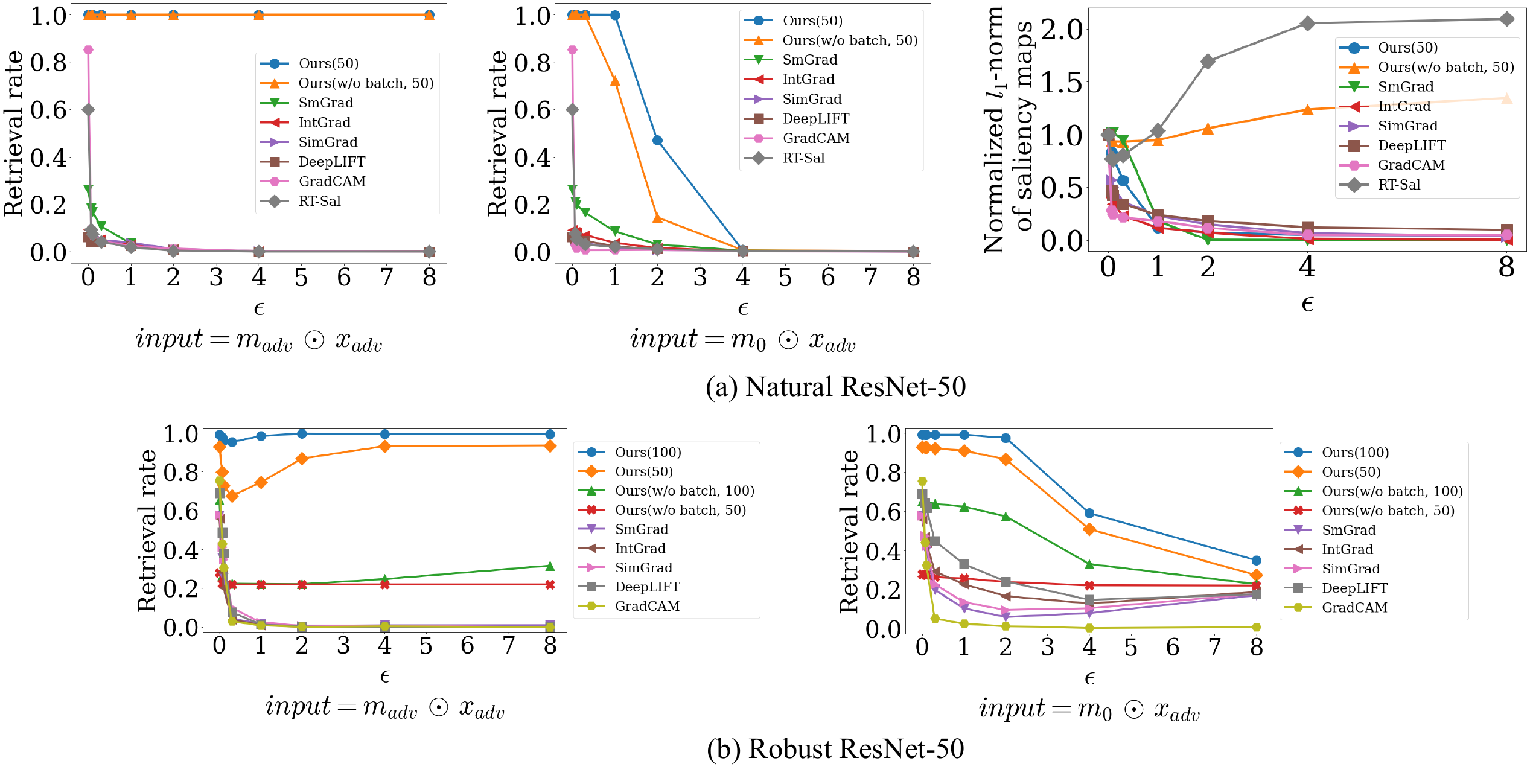}
    \end{center}
    \vspace*{-5mm}
    \caption{
        \textbf{Target class retrieval against the untargeted attacks on (a) the natural and (b) robust ResNet-50.}
        The inputs are presented below each of plots.
        Our method has three variants without using the batch (\emph{w/o batch}) or different epochs to iterate (\emph{50} and \emph{100}) when learning saliency maps.
        (Top-right) Plots present the average \lonenorm s of saliency maps of the adversarial examples by the methods on the natural model.
        The norms of the saliency maps are normalized to those of the clean images for each method.
    } \label{fig_class_robustness_untargeted_attack_resnset50}
    \vspace*{-2mm}
\end{figure*}
\section{Experimental Results}

We validate the proposed method through extensive experiments to answer the following questions:
\begin{itemize} [noitemsep,topsep=0pt]
    \item \emph {Are explanations using \ours~robust to retrieve the original target classes of input images against various adversarial attacks?}
    \item \emph{If it is the case, how do saliency maps learned by \ours~represent relevant evidence for model predictions compared with previous work?}
\end{itemize}
\black{The supp.}~\ref{supp_implementation_details} presents the implementation details.
\black{The code of \ours~is available at \url{https://github.com/JBNU-VL/RelEx}.}

\subsection{Robustness Evaluations with Class Retrieval}\label{subsec_eval_label_robustness}

We first demonstrate the robustness of this approach by evaluating the retrieval of the original target classes for given images against adversarial attacks~\cite{szegedy2013intriguing, goodfellow2014explaining} as performed in previous work~\cite{fong2017interpretable}.
Because the adversarial attacks fool models with only small perturbations to images, we use the attacks to validate the advantage of the proposed method.
We applied \emph{untargeted} and \emph{targeted attacks} based on a white-box threat model to the classifiers to create adversarial copies of the images.

We denote sampled clean images by $X_0$ and their adversarial counterparts by $X_{adv}$ for a given adversarial attack, respectively.
Also, $m_{adv}$ denotes a saliency map of $x_{adv} \in X_{adv}$.
Then, we measure the rate of the successful retrieval of the target class of $x_0$ as the most likely class of $x_{adv}$ for a given input, $m_{adv} \odot x_{adv}$, which is
$\mathop{\mathbbm{E}} \left [ \mathbbm{I} \{ \argmax_{c} f_c( m_{adv} \odot x_{adv} ) = c_{x_0} \} \right ]$
where $c_{x_0}$ is the target class of $x_0$, and $\mathbbm{I}(\cdot)$ is an indication function.

We evaluate two types of inputs: one is what is described above (i.e., $m_{adv} \odot x_{adv}$) and the other using a saliency map of a clean image (i.e., $m_{0} \odot x_{adv}$).
The evaluation of the former reveals whether a method extracts a saliency map accurately in the presence of perturbation.
The latter evaluates the robustness of saliency maps of clean images.

\textbf{Adversarial attack methods.}
The untargeted attack manipulates an input image \emph{to lead model predictions to arbitrary false labels}.
We applied \ours~to pretrained ResNet-50~\cite{he2016deep} using ImageNet~\cite{deng2009imagenet}.
We also considered a robust counterpart of the model that was adversarially trained~\cite{tsipras2018robustness}.
We denote them as \emph{natural} and \emph {robust} models, respectively.
We expect that adversarial examples crafted against the robust model are more difficult to defend than those from the natural model and, therefore, can better evaluate explanation methods.

The targeted attack aims to change an image to mislead to a specified false class or saliency map.
We chose to \emph{create false saliency maps of given images against a given explanation method with their classes kept} comparing with previous work~\cite{dombrowski2019explanations, ghorbani2019interpretation}.
We used two strategies for manipulating saliency maps: \emph{unstructured} and \emph{structured} attacks.
The former aims to create uninformative saliency maps whereas the latter misleads to the saliency map of a specified class.

\textbf{Setups for evaluation using the untargeted attack.}
For the natural ResNet-50, we randomly sampled 4000 images from the validation set of ImageNet.
The robust ResNet was trained on the Restricted ImageNet, a customized subset of ImageNet~\cite{tsipras2018robustness}.
We additionally sampled 1000 random images from the validation set of the dataset for the robust model.
Then, we created corresponding adversarial images using the projected gradient descent (PGD), one of the best universal first-order adversarial attacks~\cite{madry2018towards} with the configuration for the MNIST dataset.
We varied the \linfnorm~of the perturbation distance to $\{0.07, 0.1, 0.3, 1, 2, 4, 8\}$.
We compared our approach with the followings: \emph{SimGrad}~\cite{simonyan2013deep}, \emph{SmGrad}~\cite{smilkov2017smoothgrad}, \emph{IntGrad}~\cite{sundararajan2017axiomatic}, \emph{DeepLIFT}~\cite{shrikumar2017learning}, \emph{RT-Sal}~\cite{dabkowski2017real}, and \emph{GradCam}~\cite{selvaraju2017grad}.
See \supp~\ref{supp_adversary_generation} for the details on the adversarial example generations.

\textbf{Results of the untargeted attack.}
We present the results in Figure~\ref{fig_class_robustness_untargeted_attack_resnset50}.
First, our explanations successfully retrieve the original target classes along with the entire perturbation distance.
This result shows that our approach extracts meaningful evidence robustly in the presence of severe perturbations.
Other methods encountered significant performance decrease even at the smallest perturbation, $\epsilon=0.07$, which is visually imperceptible.
This is probably because the subtle features in the clean images were mostly perturbed even with such a small value of $\epsilon$; thus, the methods failed to determine robust features invariant to the perturbations.
Second, our explanation learned from the clean images contains robust features such that it applies to the adversaries up to $\epsilon=1.0$.
This allows a model to be tolerant against such an attack without adversarial training when combined with \ours.
In contrast, other approaches performed poorly, similar to the previous case.

For the ablation study, we considered four variants of our method without the batch or varying the number of epochs to iterate for solving Eq.~\eqref{eq_full_optimization}.
The benefit of the batch and iterating more epochs increased the robustness of explanations on clean images against the attack, which is more significant on the robust model.

Figure~\ref{fig_class_robustness_untargeted_attack_resnset50} also depicts that the higher perturbation results in the smaller \lonenorm s of our saliency maps.
Although we observe a similar behavior with the gradient-based approaches, they failed to determine relevant features, performing poorly in the class retrieval.
These results demonstrate the benefit of smoothing the local explanation as discussed in Section~\ref{subsec_local_smoothness_analysis}.

\begin{figure}[t]
    \begin{center}
        \includegraphics[width=\linewidth]{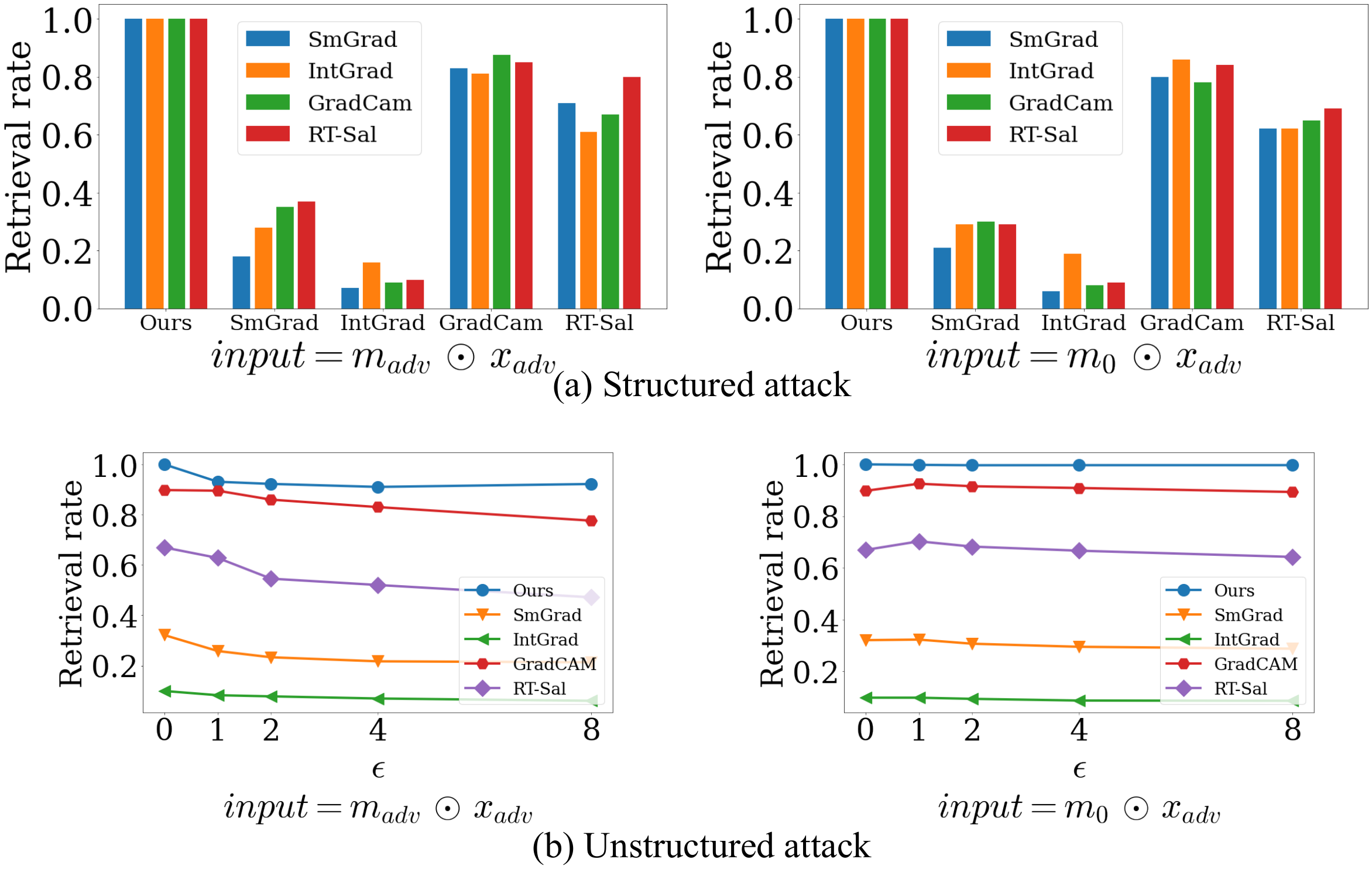}
    \end{center}
    \vspace*{-5mm}
    \caption{
        \textbf{Target class retrieval performance against the targeted attacks on the natural ResNet-50.}
        The explanation types are presented below each plot.
        (a) The horizontal axis represents the explanation methods to compare with each other.
        Each colored bar in the plots indicates a method to which the attack is applied.
        (b) Plots correspond to the adversarial images against RT-Sal~\cite{dabkowski2017real}.
    }
    \vspace*{-2mm}
    \label{fig_class_robustness_targeted_attack_resnset50}
\end{figure}

\textbf{Setups for evaluation using the targeted attacks.}
We created 1000 adversarial examples of the sampled input images by applying the targeted attacks to each method on the natural ResNet-50 as proposed in~\cite{dombrowski2019explanations} and~\cite{ghorbani2019interpretation} for the structured and unstructured attacks, respectively.
However, we observed that no adversarial examples were generated against our method.
Although a correct analysis of this observation is beyond the scope of this study, we propose that it is because of the insufficient perturbations of the targeted attacks to mislead our method.
We empirically validated the assumption that the \ltwonorm~of saliency maps due to the targeted attacks belongs to the region where \ours~is reliable in the untargeted attack. (See \supp~\ref{supp_targeted_perturbation_distance})
Instead, we evaluated our method using the adversarial examples against other methods by assuming that the adversarial attack is transferable~\cite{heo2019fooling}.

\textbf{Results of the targeted attacks.}
Overall, the results of the targeted attacks are similar to those of the untargeted attack, as depicted in Figure~\ref{fig_class_robustness_targeted_attack_resnset50}.
As expected, we observed that the attacks are transferable.
\ours~achieved an outstanding retrieval rate of close to 1 over all the settings.
Unlike the results of the untargeted attacks, the retrieval rates with GradCAM and RT-Sal, which are about 0.83 and 0.60, respectively, are comparable to the rate of the proposed method.
This suggests that the PGD-based untargeted attack is more effective than the targeted attacks.
Although Figure~\ref{fig_class_robustness_targeted_attack_resnset50} provides the results against RT-Sal in the unstructured attack, we found similar observations in the attacks against other methods.
See the supp.~\ref{supp_targeted_attack_result} for more results.

\subsection{Evaluations of the Fidelity of Saliency Maps}

\begin{figure}[t]
    \begin{center}
        \includegraphics[width=1.0\linewidth]{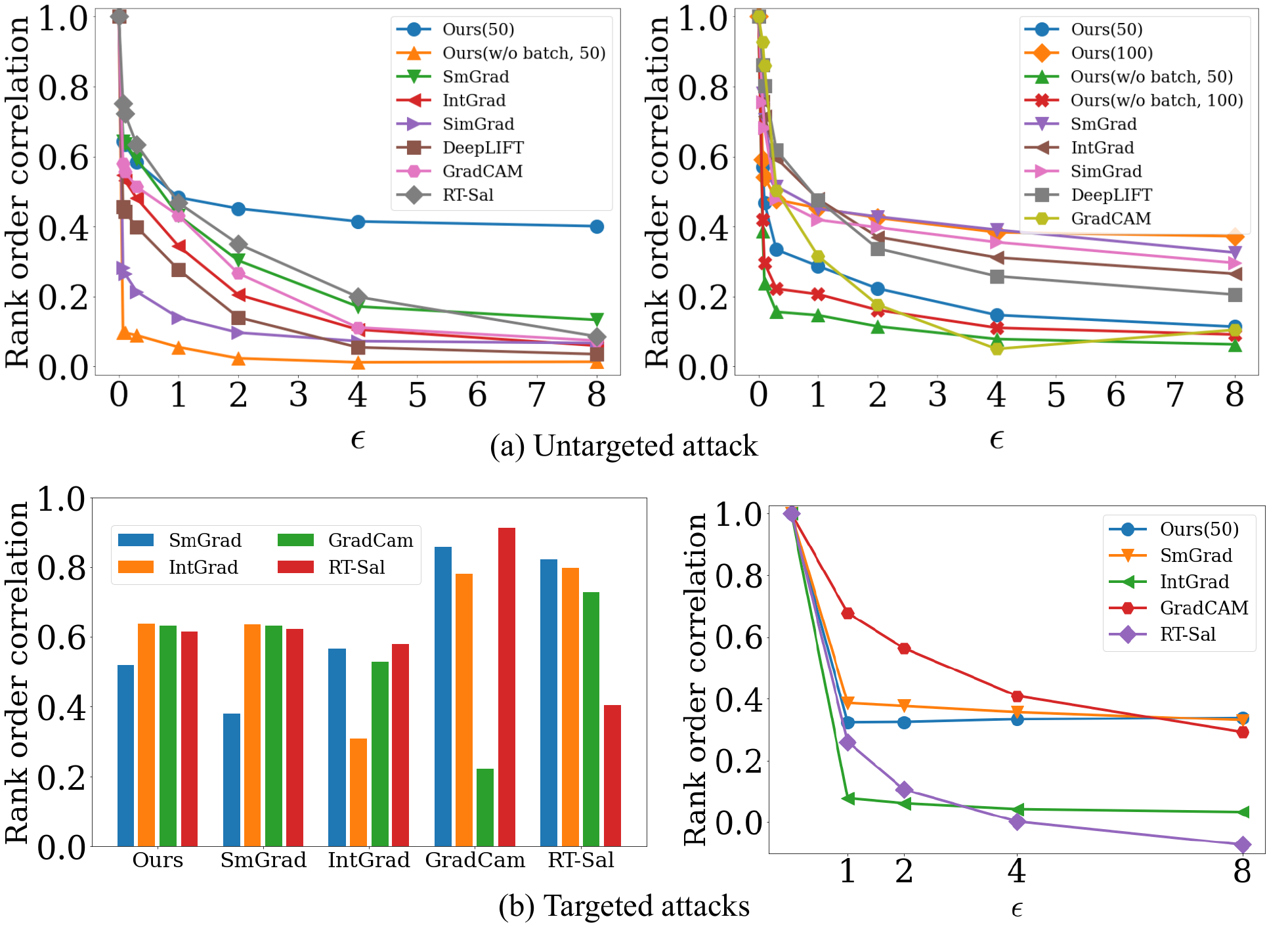}
    \end{center}
    \vspace*{-5mm}
    \caption{
        \textbf{Similarity of the saliency maps in the rank order correlation for (a) untargeted and (b) structured attacks.}
        (Bottom-left) See Figure~\ref{fig_class_robustness_targeted_attack_resnset50}(a) for the axis labels.
        (Bottom-right) The adversarial images were created against RT-Sal~\cite{dabkowski2017real}.
    }
    \vspace*{-2mm}

    \label{fig_sliency_spatial_similarity}
\end{figure}

\textbf{Metrics for the similarity of saliency maps.}
The evaluations of the target class retrieval demonstrated the robustness of the learned explanations by \ours.
To understand why, we delved into the fidelity of the saliency maps.
In particular, we quantified the quality of the saliency maps of adversarial examples by measuring 1) their spatial similarity to those of their clean counterparts and 2) the relevance of features identified by an explanation to a class score.

In the similarity evaluation, we use a metric, \emph{Spearman's rank-order correlation}~\cite{spearman1987proof}, to evaluate the similarity of saliency maps as in~\cite{ghorbani2019interpretation, dombrowski2019explanations}.
The Spearman's rank-order correlation inherently ranks the importance of input features according to a saliency map and enables us to naturally correlate feature ranks between saliency maps.

\textbf{Results of the similarity of saliency maps.}
Figure~\ref{fig_sliency_spatial_similarity} indicates that the spatial similarity of \ours~is analogous to those of other methods, unlike the case of the class retrieval.
Worse, all other methods in the case of the untargeted attack outperform our method without the batch.
The results suggest that the similarity metric is inadequate to explain the outstanding performance of \ours~in the class retrieval.
The performance mismatch between the class retrieval and the similarity of saliency maps by existing methods is due to the incorrect attribution of input features.
We observed that \ours~learned a saliency map adapting to the degree of perturbations.
In contrast, the existing methods failed to attribute \black{relevant input features to the neighboring data points}, and their saliency maps remain fixed, as illustrated in Figure~\ref{fig_key_results}(b).
See \supp~\ref{supp_saliency_map_similarity} for results of another metric.

\begin{table}[t]
    \centering
    \caption{
        \textbf{Comparison of the feature relevance, $R$, of saliency maps for given \linfnorm~of perturbation, $\epsilon$, against the untargeted attack.}
    }
    \resizebox{\linewidth}{!}{
        \begin{tabular}{|c|c|c|c|c||l|l|l|l|}
            \hline
            \multicolumn{1}{|l|}{} & \multicolumn{4}{c||}{Natural ResNet-50} & \multicolumn{4}{c|}{Robust ResNet-50}                                                                                                 \\
            \hline
            $\epsilon$             & 0.07                                    & 0.1                                   & 0.3           & 1.0           & 0.07          & 0.1           & 0.3           & 1.0           \\
            \hline \hline
            SimGrad                & 0.03                                    & 0.03                                  & 0.04          & 0.03          & 0.40          & 0.37          & 0.22          & 0.14          \\
            \hline
            IntGrad                & 0.07                                    & 0.07                                  & 0.06          & 0.03          & 0.35          & 0.30          & 0.13          & 0.07          \\
            \hline
            SmoothGrad             & 0.18                                    & 0.17                                  & 0.12          & 0.04          & 0.38          & 0.33          & 0.15          & 0.10          \\
            \hline
            DeepLIFT               & 0.04                                    & 0.04                                  & 0.04          & 0.04          & 0.48          & 0.43          & 0.20          & 0.12          \\
            \hline
            GradCAM                & 0.14                                    & 0.11                                  & 0.08          & 0.03          & 0.58          & 0.45          & 0.39          & 0.16          \\
            \hline
            RT-Sal                 & 0.12                                    & 0.10                                  & 0.06          & 0.02          & N/A           & N/A           & N/A           & N/A           \\
            \hline
            MASK                   & 0.05                                    & 0.04                                  & 0.03          & 0.01          & 0.40          & 0.32          & 0.15          & 0.14          \\
            \hline
            \textbf{RelEx}         & \textbf{0.94}                           & \textbf{0.95}                         & \textbf{0.96} & \textbf{0.97} & \textbf{0.78} & \textbf{0.66} & \textbf{0.60} & \textbf{0.56} \\
            \hline
        \end{tabular}
    } \label{tab_del_pre_score}
\end{table}

\begin{figure}[t]
    \begin{center}
        \includegraphics[width=\linewidth]{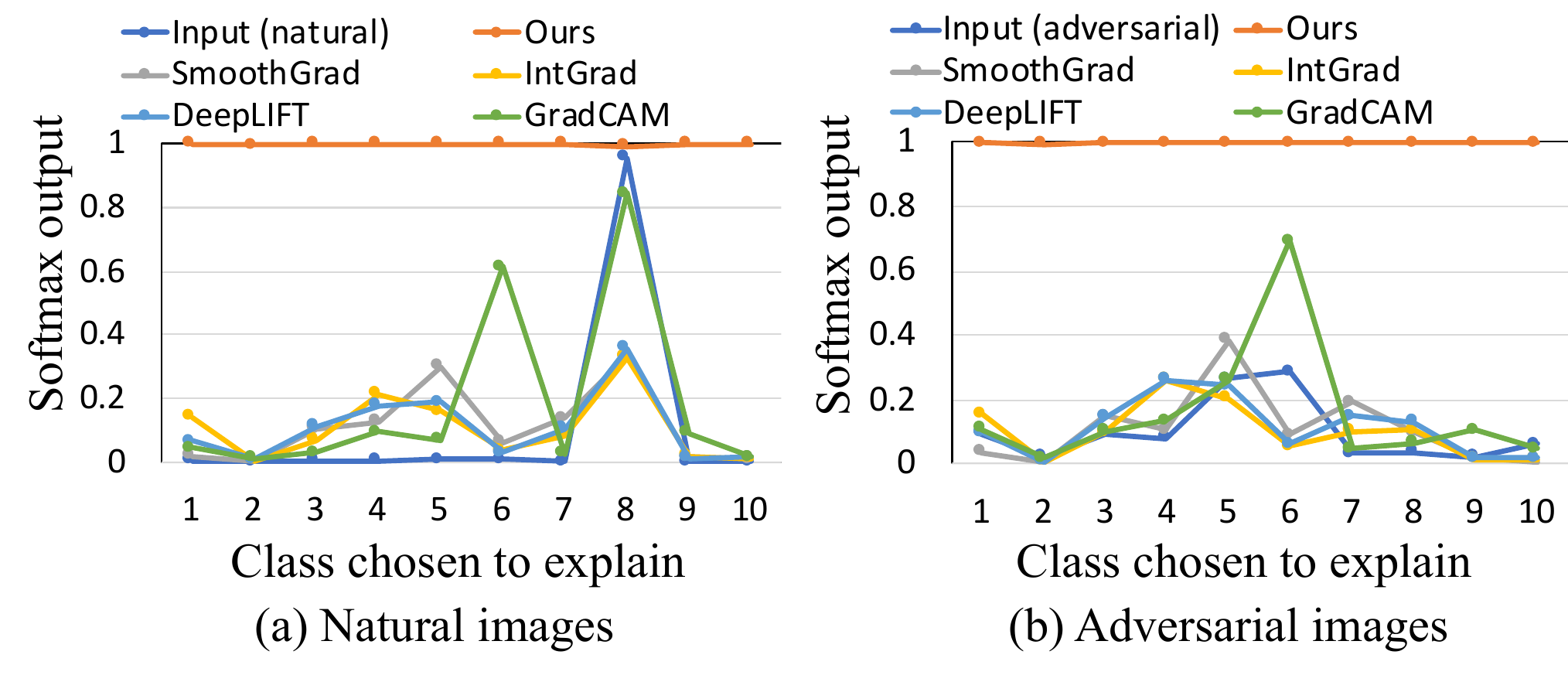}
    \end{center}
    \vspace*{-5mm}
    \caption{
        \textbf{Comparison of explanations for arbitrarily chosen classes on CIFAR-10.}
        (a) Plots show the softmax scores of explanations corresponding to all 10 classes for given natural images of \emph{class 8}.
        (b) Plots correspond to adversarial counterparts.
    }
    \vspace*{-2mm}
    \label{fig_cifar10}
\end{figure}

\textbf{Metrics for feature relevance evaluation.}
We use two metrics to evaluate the pixel-level relevancy for a given saliency map: \emph{deletion} and \emph{preservation}.
Deletion quantifies the accuracy of finding the smallest susceptive region, whereas preservation corresponds to the smallest evidential region, discussed in Section~\ref{subsec_optimization}.
See \supp~\ref{supp_feature_relevance_metric} for the details on the metrics.
The sole use of deletion is discouraged because, for instance, two extreme cases of the accurate and completely wrong smallest susceptive regions may have an identical deletion score.
Therefore, similar to the $F_1$ score, we use the harmonic mean of the deletion and preservation, $R$, as a metric of feature relevance;
in particular, $ \frac{1}{R} = \frac{1}{2} \left( \frac{1}{P} + \frac{1}{1-D} \right)$, where $P$ and $D$ are the preservation and deletion scores.
We used $1-D$ because a lower score results in better deletion.

\textbf{Results of the feature relevancy.}
Table~\ref{tab_del_pre_score} indicates that \ours~outperforms other methods in terms of the pixel-level relevancy, $R$.
The results confirm the evaluations of the target class retrieval, and visually plausible saliency maps do not necessarily present true evidence for explaining model predictions.
See \supp~\ref{supp_feature_relevance_result} for more results.

\textbf{Extracting explanations conditioned on arbitrary classes.}
Finally, we investigate whether our method can extract explanations conditioned on arbitrary non-target classes for a given input.
We sampled 400 images annotated as \emph{class 8} from the test set of CIFAR-10.
Then, we drew explanations of classes from the samples and their adversaries on natural ResNet-18 with the \linfnorm~of perturbation set to 8 out of 255~\cite{tsipras2018robustness}, and compared the softmax scores of our explanations to those of the existing methods.
Figure~\ref{fig_cifar10} illustrates that the scores of our explanations are close to 1 consistently through all classes on both the natural samples and their adversarial examples, outperforming others.
The results indicate that \ours~can extract explanations as long as relevant evidence exists in the input.
The explanations represent the specified classes faithfully, effectively excluding information on irrelevant classes.
We provide more results in \supp~\ref{supp_cifar10_results}.

%

\section{Conclusion}

\black{We introduced a reliable explanation of neural networks that} requires consistency on model outputs and the corresponding saliency maps along with neighboring data points.
The proposed method, \ours, addresses the concern by interpreting the model explanations via a locally smooth landscape with respect to the loss function of the model output.
Our analysis demonstrated that the smoothness in the landscape improves as we reduce the \lonenorm~of a saliency map.
The experimental results demonstrated that the proposed method based on the analysis identifies features relevant to the target class retrieval against the strong white-box attacks.
We also demonstrated that causal evidence for model predictions does not always coincide with visually appealing saliency maps as in previous methods.

\section*{Acknowledgement}
This work was supported by a National Research Foundation of Korea (NRF) grant funded by the Korean government (MSIT) (2019R1F1A1061941).

    {\small
        \bibliographystyle{ieee_fullname}
        \bibliography{manuscript}
    }

\end{document}


\title{
    Building Reliable Explanations of Unreliable Neural Networks: Locally Smoothing Perspective of Model Interpretation\\
    (Supplementary Materials)
}

\author{
    {Dohun Lim}
    \qquad{Hyeonseok Lee}
    \qquad{Sungchan Kim}\\
Division of Computer Science and Engineering, Jeonbuk National University, Korea\\
{\tt\small \{imdohun75,hslee0390,s.kim\}@jbnu.ac.kr}
}

\maketitle

\tableofcontents

\section{The Proofs} \label{supp_proofs}

\begin{theorem}[\textbf{Local explanations with respect to label consistency}]
    %
    Let $\gamma = \alpha \cdot v$ where $||\gamma|| = \alpha$ and $||v|| = 1$.
    Let $\mathcal{D} = \{x_i\}$ be a set of data samples where $||x_i - x_0|| \leqq \epsilon$.
    For a given saliency map $m$ calculated from $x_0$, it holds that
    %
    \begin{equation}
        \alpha \geqq \frac{c}{ || m || _1} \cdot \frac{ 2 } { || -g(x_0 + \alpha v) + g(x_0) ||  + 2 || g(x_0) ||  }
    \end{equation}
    %
    where
    %
    \begin{align}
        g(x) = -\nabla L (x, m) = \nabla \log f(m\odot x).
    \end{align}
    %
    It also holds that the loss function in Eq.~\eqref{eq_loss_robust} with respect to $x_0 + \gamma$ is upper-bounded as follows:
    %
    \begin{equation}
        \begin{split}
            &L(x_0 + \gamma, m ) \leqq \\
            &\alpha ||m||_1 \left( \frac{||-g(x_0 + \alpha v) + g(x_0)||}{2}  + || g(x_0) || \right).
        \end{split}
    \end{equation}
\end{theorem}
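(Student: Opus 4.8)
The plan is to prove the loss bound first --- it carries all the content --- and then read off the lower bound on $\alpha$ from it by a one-line rearrangement. For the loss bound I would write $L(x_0+\gamma,m)$ as a line integral of its gradient along the segment from $x_0$ to $x_0+\gamma$, replace that gradient by its linear interpolant between the two endpoints (this is where the \emph{locally smoothing} hypothesis is invoked), and finish with Cauchy--Schwarz and the triangle inequality.

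Concretely: the saliency map $m$ is fixed, so $L(\cdot,m)$ is an ordinary function of the input; the loss in Eq.~\eqref{eq_loss_robust} contributes the constant (mask-dependent but $x$-independent) factor $\|m\|_1$, so that --- with $g(x)=\nabla\log f(m\odot x)$ read as the unweighted gradient in the statement --- one has $\nabla_x L(\cdot,m) = -\|m\|_1\, g(\cdot)$, together with $L(x_0,m)=0$ at the reference input. The fundamental theorem of calculus applied to $t\mapsto L(x_0+t\gamma,m)$ then gives
\begin{equation}
    L(x_0+\gamma,m) \;=\; -\,\|m\|_1 \int_0^1 \left\langle g(x_0+t\gamma),\, \gamma \right\rangle \, dt .
\end{equation}
Next I would use local smoothness to replace $g$ along this short segment by its endpoint interpolant, $g(x_0+t\gamma) = (1-t)\,g(x_0) + t\,g(x_0+\alpha v)$, so that the integral collapses to $\langle \tfrac{1}{2}(g(x_0)+g(x_0+\alpha v)),\, \gamma\rangle$. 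Applying Cauchy--Schwarz (with $\|\gamma\|=\alpha$), the identity $\tfrac{1}{2}(g(x_0)+g(x_0+\alpha v)) = g(x_0) + \tfrac{1}{2}(g(x_0+\alpha v)-g(x_0))$, and the triangle inequality then yields exactly
\begin{equation}
    L(x_0+\gamma,m) \;\leqq\; \alpha\,\|m\|_1 \left( \frac{\| -g(x_0+\alpha v)+g(x_0) \|}{2} + \| g(x_0) \| \right),
\end{equation}
the claimed bound. The lower bound on $\alpha$ is then immediate: combine this inequality with the label-consistency level $c \leqq L(x_0+\gamma,m)$ (fixed in the main text) and divide through by $\|m\|_1 \left( \tfrac{1}{2}\| -g(x_0+\alpha v)+g(x_0)\| + \|g(x_0)\| \right)$.

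The step I expect to be the main obstacle is the linearization of $g$ along the segment. A fully rigorous version needs either a second-order Taylor remainder controlled by an explicit Lipschitz or Hessian bound on $\nabla\log f$ over the $\epsilon$-ball around $x_0$ that contains $\mathcal{D}$ --- which then has to be absorbed into the constants or argued to be negligible at this scale --- or a precise statement of the \emph{locally smooth} assumption under which $g$ may be treated as affine over $\mathcal{D}$; I expect the paper takes the latter route. A secondary point of care is the norm bookkeeping: keeping $\|m\|_1$, rather than $\|m\|_\infty$ or $\|m\|_2$, as the constant in front of the gradient terms, which hinges on exactly how the perturbation is propagated through the mask $m$ in Eq.~\eqref{eq_loss_robust}.
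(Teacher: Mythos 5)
Your route is essentially the paper's argument in a different dress: the line integral of the gradient with the endpoint linear interpolant is exactly the paper's second-order Taylor expansion with the secant (finite-difference) Hessian $H\gamma \approx \nabla L(x_0+\alpha v,m)-\nabla L(x_0,m)$, and both proofs finish with Cauchy--Schwarz, the triangle inequality, the bound $\|m\|\leqq\|m\|_1$, and the constraint that the (approximated) loss increase is at least $c$. One bookkeeping slip needs repair: the chain rule gives $\nabla_x L(x,m) = -\,m \odot g(x)$ (a Hadamard product with the mask), not the scalar identity $\nabla_x L = -\|m\|_1\, g$; the factor $\|m\|_1$ enters only afterwards, via $\|m\odot u\| \leqq \|m\|\cdot\|u\| \leqq \|m\|_1\|u\|$, which is precisely the step the paper takes between its Eq.~\eqref{eq_hessian_2} and Eq.~\eqref{eq_CS_ineq_3}. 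With that correction your final displayed inequality, and hence the rearrangement giving the lower bound on $\alpha$, goes through unchanged; your concern about the rigor of linearizing $g$ along the segment applies equally to the paper's own proof, which treats the secant approximation of the Hessian as exact.
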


\begin{proof}
    We begin with the definition of Hessian of the loss function $L(x_0, m)$ with respect to input $x$ at $x_0$ to learn a saliency $m$ as
    %
    \begin{align}
         & H = \frac{ \nabla L ( x_0+\gamma , m ) - \nabla L( x_0 , m ) }{\gamma} =  \frac{ \nabla L ( x_0+ \alpha v , m ) - \nabla L( x_0 , m ) }{ \alpha v }
    \end{align}
    %
    Then,
    \begin{align} \label{eq_hessian}
         & H \gamma = H \alpha v \approx \nabla L ( x_0 + \alpha v , m ) - \nabla L ( x_0 , m )
    \end{align}
    %
    By substituting $L(m \odot x) = - \log f ( m \odot x )$ into Eq.~\eqref{eq_hessian}, we have
    %
    \begin{align}
        H \gamma & = - \nabla \log f (m \odot (x_0 + \alpha v)) +  \nabla \log f(m \odot x_0)                                                       \\
                 & = - \left . \frac{\partial \log f( m \odot x + \alpha m \odot v)}{\partial ( m \odot x) } \odot m \right \vert_{x_0 +  \alpha v}
        + \left . \frac{\partial \log f( m \odot x)}{\partial ( m \odot x) } \odot m \right \vert_{x_0} \label{eq_Hr}
    \end{align}
    %
    We rewrite $g(x)$ in Eq.~\eqref{eq_g_x} as
    %
    \begin{equation}
        g(x) = \frac{\partial \log f( m \odot x)}{\partial ( m \odot x) }.
    \end{equation}
    %
    Then, Eq.~\eqref{eq_Hr} is given by
    %
    \begin{equation} \label{eq_hessian_2}
        H \gamma = \left \{ -g(x_0 + \alpha v) + g(x_0) \right \} \odot m
    \end{equation}
    %
    Using the Cauchy–Schwarz inequality, for the constraint of Eq.~\eqref{eq_gamma_argmin_2}, the following holds
    %
    \begin{align}\label{eq_CS_ineq_1}
        \nabla L (x_0 , m)^T \gamma + \frac{1}{2} \gamma^T H \gamma & \leqq || \nabla L ( x_0  , m)^T || \cdot ||\gamma|| + \frac{1}{2} ||\gamma|| \cdot || H \gamma || \\
    \end{align}
    %
    By integrating Eq.~\eqref{eq_hessian_2} into Eq.~\eqref{eq_CS_ineq_1},
    %
    \begin{align}\label{eq_CS_ineq_2}
        \nabla L (x_0 , m)^T \gamma + \frac{1}{2} \gamma^T H \gamma & \leqq \alpha ||g(x_0)|| \cdot ||m|| + \frac{\alpha}{2} || -g(x_0 + \alpha v) + g(x_0) || \cdot ||m||
    \end{align}
    %
    Using $||m|| \leqq ||m||_1$, it holds that
    %
    \begin{align}\label{eq_CS_ineq_3}
         & \nabla L (x_0 , m)^T \gamma + \frac{1}{2} \gamma^T H \gamma \leqq \alpha ||g(x_0)||\cdot||m||_1 + \frac{\alpha}{2}|| -g(x_0 + \alpha v) + g(x_0) || \cdot ||m||_1.
    \end{align}
    %
    Combining Eq.~\eqref{eq_gamma_argmin_2} and Eq.~\eqref{eq_CS_ineq_3} gives
    %
    \begin{align} \label{eq_theorem_1_last_1}
         & \alpha ||g( x_0 )|| \cdot ||m||_1 + \frac{\alpha}{2}|| -g(x_0 + \alpha v) + g(x_0) || \cdot ||m||_1 \geqq c
    \end{align}
    %
    By rearranging Eq.~\eqref{eq_theorem_1_last_1}, we reach Eq.~\eqref{eq_alpha_bound},
    %
    \begin{align*}
        \alpha \geqq \frac{c}{ || m || _1} \cdot \frac{ 2 } { || -g(x_0 + \alpha v) + g(x_0) ||  + 2 || g(x_0) ||  }.
    \end{align*}
    %
    Finally, combining Eq.~\eqref{eq_loss_robust} and Eq.~\eqref{eq_CS_ineq_3} gives Eq.~\eqref{eq_classification_loss_bound} as
    %
    \begin{align*}
        L( x_0 + \gamma  , m) \leqq \alpha ||m||_1 \left( \frac{1}{2} || -g(x_0 + \alpha v) + g(x_0) || + || g(x_0) || \right)
    \end{align*}
\end{proof}

%
\begin{theorem}[\textbf{Local explanations with respect to saliency map consistency}]
    %
    Let $\mathcal{D} = \{ x_i \}$ be the vicinity of the input data $x_0$ such that $||x_i - x_0|| \leqq \epsilon$ where $\epsilon$ being a small positive number.
    Then, distance between the gradients of explanations of $x_i$ and $x_0$ is lower-bounded as follows:
    %
    \begin{equation}
        || \nabla L(x_i, m ) - \nabla L (x_0, m ) || \leqq || m ||_1 \cdot || -g(x_0 + \alpha v) + g(x_0) ||.
    \end{equation}
    %
\end{theorem}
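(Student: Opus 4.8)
The plan is to obtain the bound directly from the Hessian--gradient identity that was already derived in the proof of the previous theorem, so very little new work is needed. First I would write $\gamma = x_i - x_0$ and decompose it as $\gamma = \alpha v$ with $||v|| = 1$ and $\alpha = ||\gamma|| \leqq \epsilon$. Because $\epsilon$ is small, the first-order (Hessian) expansion used in Eq.~\eqref{eq_hessian} is legitimate, and it gives $\nabla L(x_i, m) - \nabla L(x_0, m) \approx H\gamma$. In other words, the quantity to be bounded is exactly $H\gamma$ for the perturbation connecting $x_0$ to $x_i$.

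Next I would substitute the closed form of $H\gamma$ established in Eq.~\eqref{eq_hessian_2}, namely $H\gamma = \{-g(x_0 + \alpha v) + g(x_0)\} \odot m$, so that $\nabla L(x_i, m) - \nabla L(x_0, m) = \{-g(x_0 + \alpha v) + g(x_0)\} \odot m$. Taking the Euclidean norm of both sides then reduces the statement to bounding the norm of a Hadamard product by the product of the norms of its factors. For that I would use the elementary inequality $||a \odot b||^2 = \sum_i a_i^2 b_i^2 \leqq (\sum_i a_i^2)(\sum_i b_i^2) = ||a||^2 \, ||b||^2$, which holds because the right-hand side exceeds the left-hand side only by the nonnegative cross terms $\sum_{i \neq j} a_i^2 b_j^2$. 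Applying this with $a = -g(x_0 + \alpha v) + g(x_0)$ and $b = m$ yields $||\nabla L(x_i, m) - \nabla L(x_0, m)|| \leqq ||m|| \cdot ||-g(x_0 + \alpha v) + g(x_0)||$, and finishing with $||m|| \leqq ||m||_1$, exactly as in the passage from Eq.~\eqref{eq_CS_ineq_2} to Eq.~\eqref{eq_CS_ineq_3}, produces the claimed inequality.

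I do not expect any genuine obstacle here: the substance is entirely inherited from Eq.~\eqref{eq_hessian_2}, and the only things to be careful about are that the first-order Hessian approximation is used only in the small-$\epsilon$ regime assumed in the hypothesis, and that the Hadamard-product norm inequality together with $||m|| \leqq ||m||_1$ is applied in the correct direction. No estimate beyond those already appearing in the proof of the first theorem is required, so the result is essentially a corollary of that argument.
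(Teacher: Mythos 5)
Your proposal is correct and follows essentially the same route as the paper: first-order Taylor expansion to reduce the left-hand side to $\|H\gamma\|$, substitution of the explicit form $H\gamma = \{-g(x_0+\alpha v)+g(x_0)\}\odot m$ from the first theorem's proof, the Hadamard-product norm bound, and finally $\|m\|\leqq\|m\|_1$. If anything, you are more explicit than the paper, which compresses the last two steps into a reference to the Cauchy--Schwarz step of the earlier proof; your spelled-out inequality $\|a\odot b\|\leqq\|a\|\cdot\|b\|$ is exactly the estimate being invoked there.
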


\begin{proof}
    We begin from the following.
    %
    \begin{equation}
        \nabla L(x , m) = - \log f ( m \odot x ) = - \frac{\partial \log f( m \odot x)}{\partial ( m \odot x) } \odot m
    \end{equation}
    %
    Based on the requirement on the robustness of a saliency map as described in Assumption~\ref{assumption_saliency_robustness}, we assume that the gradient of a data point $x_i$ can be written using the first-order Taylor expansion at $x_0$, which is given by
    %
    \begin{equation}
        \nabla L(x_i , m) = \nabla L(x_0 + \gamma , m) \approx \nabla L(x_0 , m) + H \gamma.
    \end{equation}
    %
    Then, we consider distance between the gradients of $x_i$ and $x_0$ as
    %
    \begin{equation}
        || \nabla L(x_i , m) - \nabla L (x_0 , m) || = || H \gamma ||.
    \end{equation}
    %
    Similar to the steps we took corresponding to Eq.~(\ref{eq_CS_ineq_1}) in the proof of Theorem~\ref{theorem_cls_robust}, the following holds
    %
    \begin{equation*}
        || \nabla L( x_i  , m) - \nabla L ( x_0  , m) || \leqq || m ||_1 \cdot || -g(x_0 + \alpha v) + g(x_0) || .
    \end{equation*}
    %
\end{proof}

%
%
\section{Experimental Setups}

\subsection{Implementation Details} \label{supp_implementation_details}
%
For the objective function in Eq.~\eqref{eq_full_optimization}, given an input image $x_0 = \{ x_{0,i} \} \in \mathbbm{R}^d $ as a vector of $d$ pixles, we created a batch of 100 neighboring data points with respect to $x_0$, $\mathcal{D}$, by adding random noise following the normal distribution, $N(0, \sigma)$, to each pixel $x_{0,i}$, where $\sigma$ is a standard deviation and $\sigma = 0.1 \times \left( \max (x_{0,i}) - \min (x_{0,i} ) \right)$.

A saliency map $m$ is initialized following the uniform distribution on the interval $[0, 0.01]$.
We set the parameters of the objective function as $\lambda_1 = 0.0001$ and $\lambda_2 = 1.0$, respectively.
We solve Eq.~\eqref{eq_full_optimization} using the stochastic gradient descent (SGD) for $50$ epochs with the learning rate set to 0.001.
We denote this baseline by \emph{$Ours (50)$} in Figure~\ref{fig_class_robustness_untargeted_attack_resnset50}.
We used normalized gradient in the optimization process when applying to the SGD.
We found that the normalization performed better in terms of the quality of saliency maps and the stability of the optimization.\\

\textbf{Computation time.}
%
The current implementation of the proposed method takes about 17 seconds to solve the optimization with the baseline setting for an image from ImageNet on ResNet-50 using a single RTX 2080 Ti GPU.\\

\textbf{Post-processing of saliency maps.}
%
It is required to post-process a saliency map to construct an explanation for a given image by multiplying a saliency map and the image.
For the proposed method, we use a saliency as a result of the optimization in Eq.~\eqref{eq_full_optimization} directly with no additional processing of the saliency map.
However, we applied two strategies differently according to the previous methods.
The first strategy is taken from~\cite{smilkov2017smoothgrad} and applies to SmGrad, IntGrad, SimGrad, and DeepLIFT.
This strategy takes the expectation of absolute values of 3-channels, RGB, for each pixel in a saliency map.
Then, a final saliency is given by normalizing the expectation for each pixel to the \nth{99} percentile of high value.

The second strategy applies to GradCAM and RT-Sal, where a saliency map consists of single-channel pixel-wise values.
Let $g = \{ g_i \}$ be a saliency map that is a result of GradCAM or RT-Sal and, thus, not normalized.
Then $m = \{ m_i \}$, which is a normalized counterpart that we use to evaluate the explanations, is given by
%
\begin{equation*}
    m_i = \frac{ g_i - \min(g_i)}{ \max(g_i) - \min(g_i)}.
\end{equation*}
%

\subsection{Generation of Adversarial Examples} \label{supp_adversary_generation}

\textbf{Implementing the untargeted attack.}
%
We used CleverHans~\cite{papernot2018cleverhans} to implement the PGD-based untargeted attack.
We primarily follow the procedure in~\cite{madry2018towards}.
In particular, we applied 40 iterations of the PGD attack when generating adversarial examples.
Because the ResNet model in the experiments takes pre-process images where pixel values are scaled to $[-2.117, +2.639]$, we also changed the step size and the range of $\ell_{\infty}$-norm of perturbation for the PGD attack in the literature to 0.01 and \{0.07, 0.1, 0.3, 1, 2, 4, 8\} in this study, respectively.
\\

\textbf{Implementing the targeted attacks.}
%
We used the codes released by~\cite{ghorbani2019interpretation} and~\cite{dombrowski2019explanations} to implement the unstructured and the structured attacks respectively.

For the structured attack, we applied the attack for 1500 iterations to the natural ResNet-50 with a learning rate of 0.0002.
We set two prefactor values of $10^{11}$ and $10^6$ for the terms in the objective function of the attack, which correspond to a saliency map and the accuracy loss, respectively~\cite{dombrowski2019explanations}.

In the case of the unstructured attack, we applied a \emph{top-$k$ fooling} that aims to generate a false saliency map where the top-$k$ feature importance of the saliency map of an input image is reduced as much as possible.
We set $k$ to 1000 in the experiments.
We applied the attack for 300 iterations.
Due to the different preprocessing setting of the model as in the case of the untargeted attack, we rescaled the step size and the perturbation distance used in~\cite{ghorbani2019interpretation} accordingly.

%
%
\section{Additional Experimental Results}

\subsection{Comparison of Perturbation Distance for the Adversarial Attacks} \label{supp_targeted_perturbation_distance}

\begin{table} [h]
    \centering
    \caption{Difference of adversarial examples compared to their clean counterparts from ImageNet when applying the PGD-based untargeted attack to the natural ResNet-50 by varying $\ell_{\infty}$-norm of perturbation.} \label{supp_tab_perturbation_distance_untargeted_attack}
    \begin{tabular}{|c|c|c|c|c|c|c|c|}
        \hline
        $\ell_{\infty}$-norm of perturbation & 0.07    & 0.1     & 0.3     & 1        & 2        & 4        & 8        \\
        \hline\hline
        Difference in $\ell_2$-norm          & 17.8596 & 24.3589 & 67.5621 & 215.2923 & 405.0308 & 680.0658 & 861.8575 \\
        \hline
    \end{tabular}
\end{table}

\begin{table} [h]
    \centering
    \caption{Difference of adversarial examples compared to their clean counterparts from ImageNet when applying the targeted attacks to the natural ResNet-50 against each method.} \label{supp_tab_perturbation_distance_targeted_attack}
    \begin{tabular}{|c|c|c|c|c|c|}
        \hline
        Methods                              & \multicolumn{1}{l|}{Difference in $\ell_2$-norm} & IntGrad  & SmGrad  & RT-Sal  & GradCam  \\
        \hline\hline
        \multirow{3}{*}{Structured attack}   & min.                                             & 14.951   & 21.448  & 13.549  & 15.345   \\
        \cline{2-6}
                                             & avg.                                             & 628.346  & 628.28  & 628.104 & 628.123  \\
        \cline{2-6}
                                             & max.                                             & 1007.269 & 1005.27 & 1006.81 & 1006.802 \\
        \hline
        \multirow{3}{*}{Unstructured attack} & min.                                             & 8.609    & 3.301   & 3.377   & 3.02     \\
        \cline{2-6}
                                             & avg.                                             & 54.487   & 16.847  & 15.589  & 49.815   \\
        \cline{2-6}
                                             & max.                                             & 170.175  & 22.06   & 21.812  & 169.387  \\
        \hline
    \end{tabular}
\end{table}

Table~\ref{supp_tab_perturbation_distance_targeted_attack} and Table~\ref{supp_tab_perturbation_distance_untargeted_attack} show that the perturbation distance caused by the targeted attacks is comparable to the case where the $\ell_{\infty}$-norm of perturbation is 4.
Because the proposed method is robust in such a level of perturbation against the untargeted attack as shown in the paper (Figure~\ref{fig_class_robustness_untargeted_attack_resnset50}), we assume that it is impractical to create adversarial images by applying the targeted attacks against our method.

%
%
\subsection{Results on Similarity of Saliency Maps} \label{supp_saliency_map_similarity}

In addition to \emph{Spearman's rank-order correlation}~\cite{spearman1987proof}, we provide the results of another metric, a \emph{top-$k$ intersection}, to evaluate the similarity of saliency maps.
The top-$k$ intersection measures the size of the intersection of the $k$-most important features between a clean image and its adversary~\cite{ghorbani2019interpretation}.

Figure~\ref{fig_supp_topk} shows the results of top-$1000$ intersection, confirming that spatial similarity is unrelated to the fidelity of saliency maps to the model predictions.

\begin{figure}[h]
    \begin{center}
        \includegraphics[width=0.8\linewidth]{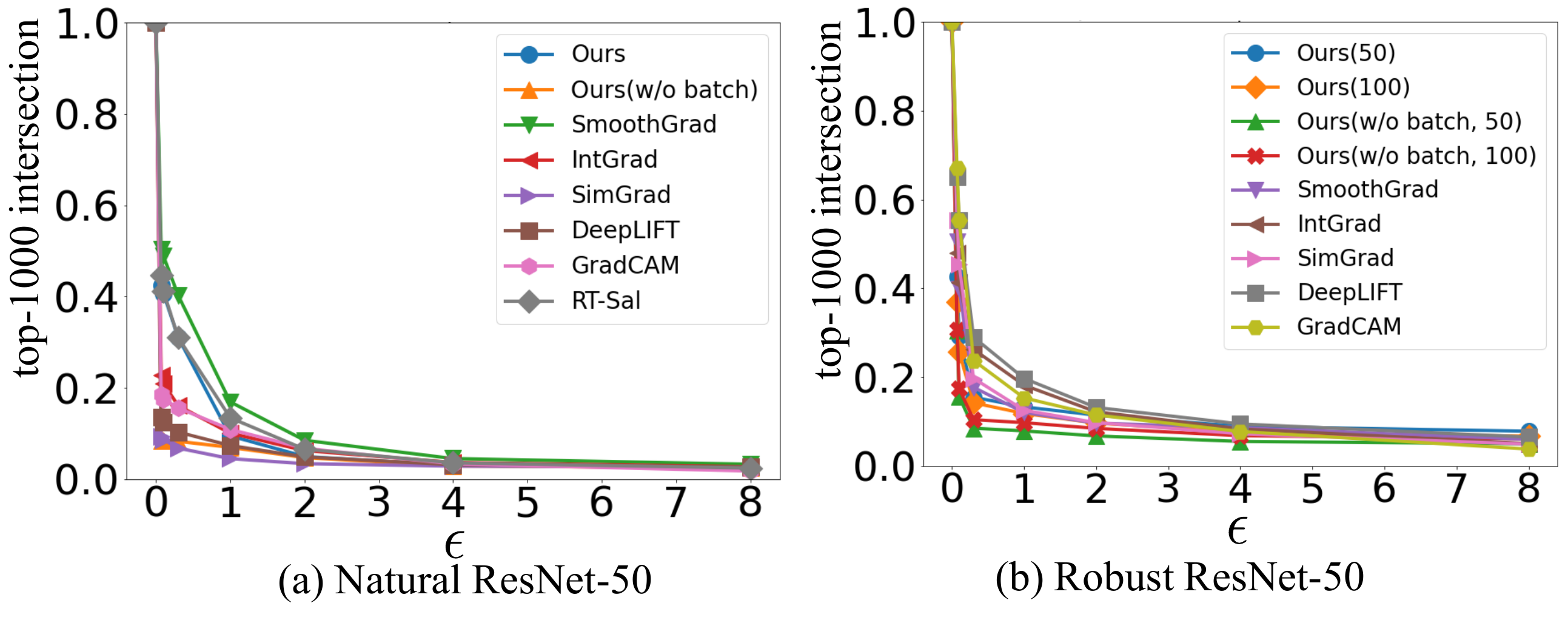}
    \end{center}
    \caption{
        Similarity of the saliency maps of the adversarial examples in the \textbf{top-1000 intersection} against the \textbf{untargeted attack}.
    }
    \label{fig_supp_topk}
\end{figure}

%
%
\subsection{Metrics for Feature Relevance Evaluation} \label{supp_feature_relevance_metric}

Deletion and preservation are metrics to evaluate the fidelity of a given saliency map with respect to the softmax score of the corresponding class.\\

\textbf{Deletion.}
%
A deletion score, which is also known as \emph{pixel flipping}, is calculated as follows.
First, we sort pixels of the input image according in descending order of their corresponding values in the saliency map.
We apply flipping all pixels to zero in the sorted order, creating a plot that represent a target class score of a given input, which is an explanation in this study.
We measure area-under-cover (AUC) of the plot as the deletion score of the saliency map.
In general, a significant drop should appear as early as possible with a saliency map of high fidelity.
Thus, better deletion results in a low deletion score.
See plots on left for each of the methods in Figure~\ref{fig_supp_del_pre_natural_resnet} and Figure~\ref{fig_supp_del_pre_robust_resnet}.\\

\textbf{Preservation.}
%
The measurement of a preservation score is similar to the case of a deletion score but pixels of an input image are sorted in the ascending order with respect to the saliency map.
Thus, the drop should be as late as possible, meaning that irrelevant pixels are removed earlier than relevant ones.
A high score indicates better preservation as opposed to the case of deletion.
See plots on the right for each of the methods in Figure~\ref{fig_supp_del_pre_natural_resnet}  and Figure~\ref{fig_supp_del_pre_robust_resnet}.\\

\begin{table}[H]
    \centering
    \caption{Deletion and preservation scores of the methods on ImageNet when applied to the natural ResNet-50.}
    \label{supp_tab_del_pre_clean_images}
    \begin{tabular}{|c|c|c|}
        \hline
        Method                               & \begin{tabular}[c]{@{}c@{}}Deletion\\(lower is better)\end{tabular} & \begin{tabular}[c]{@{}c@{}}Preservation\\(higher is better)\end{tabular} \\
        \hline\hline
        SimGrad                              & 0.1336                     & 0.2519                     \\
        \hline
        GradCAM                              & 0.1232                     & 0.5647                     \\
        \hline
        SmGrad                               & 0.0800                     & 0.3845                     \\
        \hline
        IntGrad                              & 0.0907                     & 0.3650                     \\
        \hline
        DeepLIFT                             & 0.0980                     & 0.3570                     \\
        \hline
        FGVis~\cite{wagner2019interpretable} & 0.0644                     & -                          \\
        \hline
        \textbf{RelEx (proposed)}            & \textbf{0.0567}            & \textbf{0.4093}            \\
        \hline
    \end{tabular}
\end{table}

Table~\ref{supp_tab_del_pre_clean_images} shows the deletion and preservation scores by applying each method to \emph{clean images of the ImageNet validation set} on the natural ResNet-50.
The score of FGVis~\cite{wagner2019interpretable} is taken from their paper due to the unavailability of implementation, to the best of our knowledge, which was the state-of-the-art deletion score.
Our method outperformed FGVis, achieving a new state-of-the-art performance in both deletion and preservation scores.

%
%
\subsection{Results on the Targeted Attacks} \label{supp_targeted_attack_result}

Figure~\ref{fig_supp_untargeted_attack} depicts the additional results of the unstructured attack against three more methods, SmGrad, IntGrad, and GradCAM.

\begin{figure}[H]
    \begin{center}
        \includegraphics[width=0.7\linewidth]{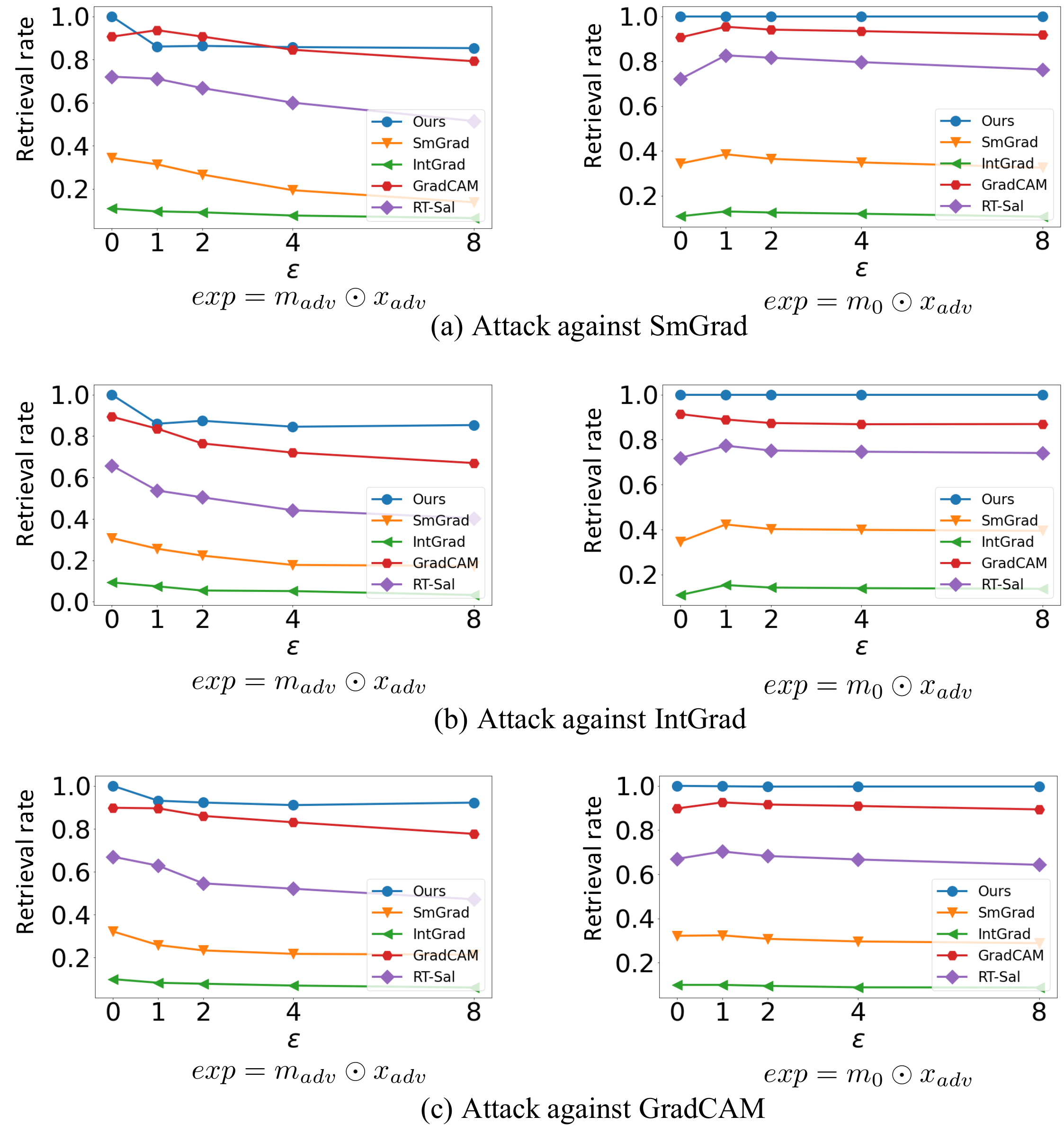}
    \end{center}
    \caption{
        Target class retrieval performance against the \textbf{unstructured attacks} on the \textbf{natural ResNet-50} for an explanation presented below each plot.
        Plots correspond to the adversarial images against (a) SmGrad~\cite{smilkov2017smoothgrad}, (b) IntGrad~\cite{sundararajan2017axiomatic}, and (c) GradCAM~\cite{smilkov2017smoothgrad}.
    }
    \label{fig_supp_untargeted_attack}
\end{figure}

\sectionbreak

%
%
\subsection{Results on Feature Relevancy of Saliency Maps} \label{supp_feature_relevance_result}

We provide the deletion and the preservation plots with respect to each of the explanation methods against the untargeted attack.

\begin{figure}[H]
    \begin{center}
        \includegraphics[width=\linewidth]{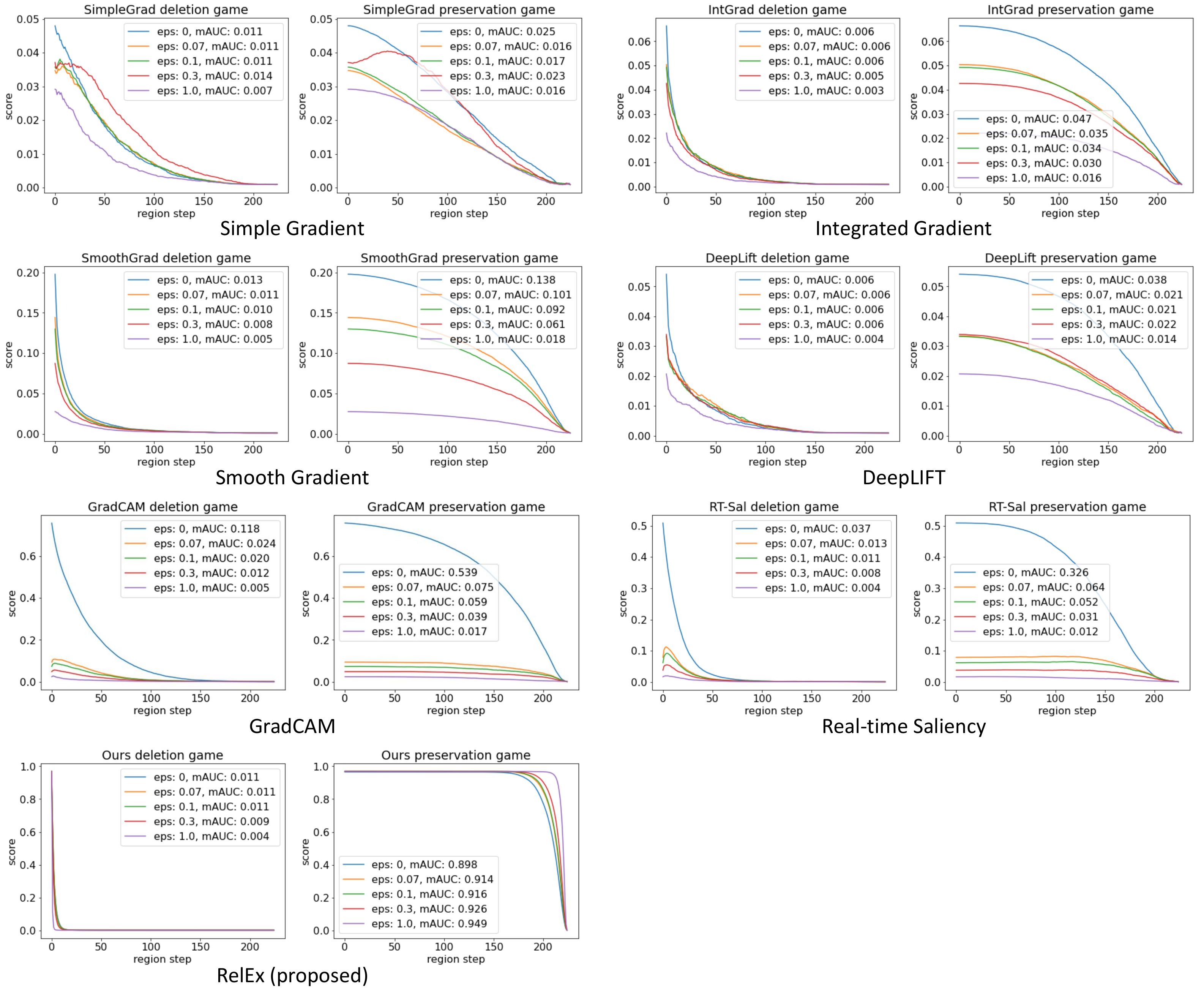}
    \end{center}
    \caption{
        The relevancy of saliency maps for each method in terms of deletion and preservation scores on the \textbf{natural ResNet-50} against the \textbf{untargeted attack}.
    }
    \label{fig_supp_del_pre_natural_resnet}
\end{figure}

\begin{figure}[H]
    \begin{center}
        \includegraphics[width=\linewidth]{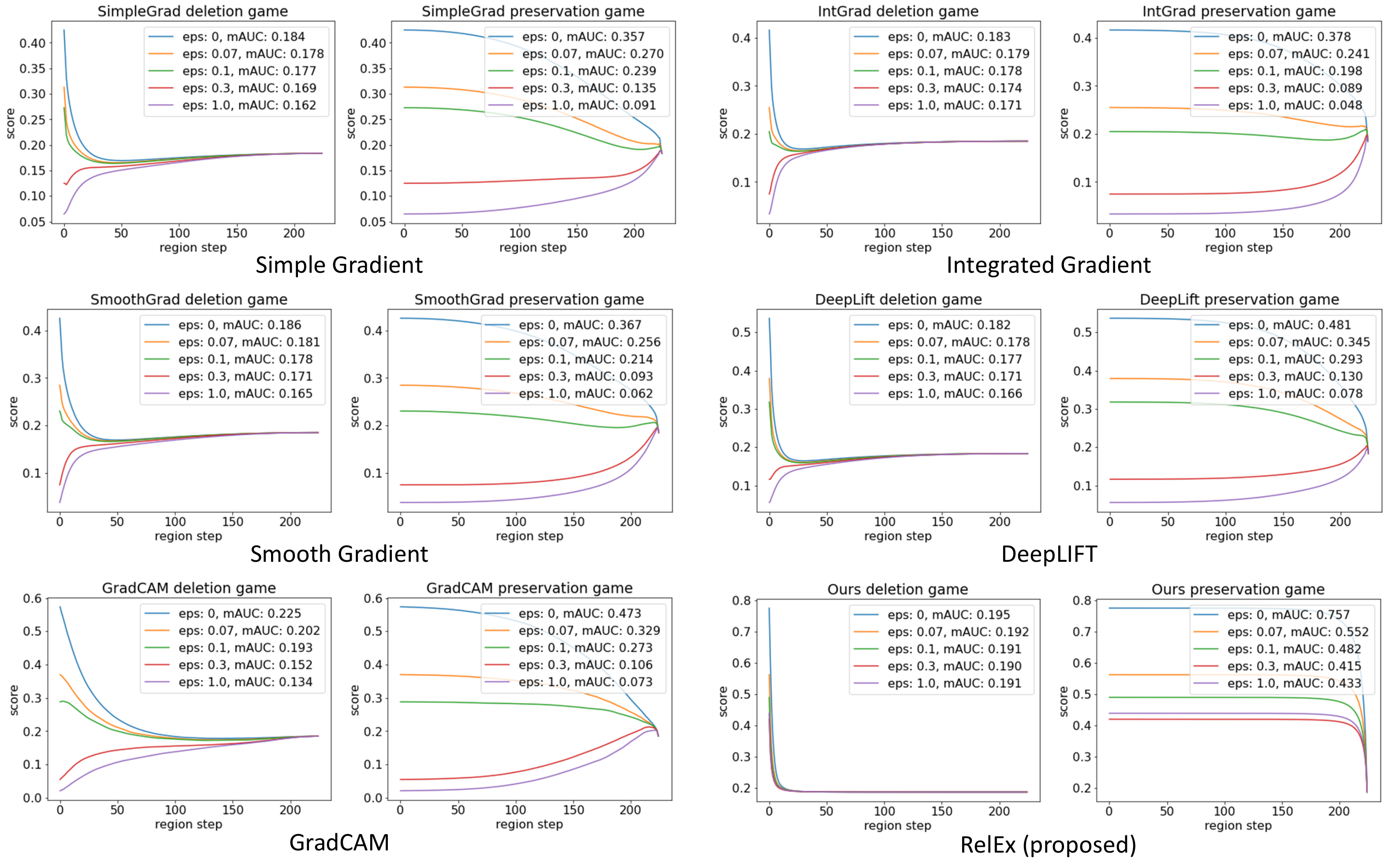}
    \end{center}
    \caption{
        The relevancy of saliency maps for each method in terms of deletion and preservation scores on the \textbf{robust ResNet-50} against the \textbf{untargeted attack.}
    }
    \label{fig_supp_del_pre_robust_resnet}
\end{figure}

\sectionbreak

%
%
\subsection{Extracting Explanations of Arbitrary Classes} \label{supp_cifar10_results}

Figure~\ref{figs_supp_cifar10} illustrates the additional results of explaining arbitrary classes on CIFAR-10.
The proposed method created explanations of three non-target classes, \emph{classes 1, 5}, and \emph{9}, that were applied to randomly selected images of the target \emph{class 7}.
The experiments were applied to both the clean images and their adversaries created by the PGD-based untargeted attack.

The plots show that the proposed method extracted the explanations of the chosen non-target class correctly.
The explanations faithfully contain the information of their corresponding classes.
In other words, almost no evidence on other classes were captures in the explanations.

\begin{figure}[H]
    \begin{center}
        \includegraphics[width=0.7\linewidth]{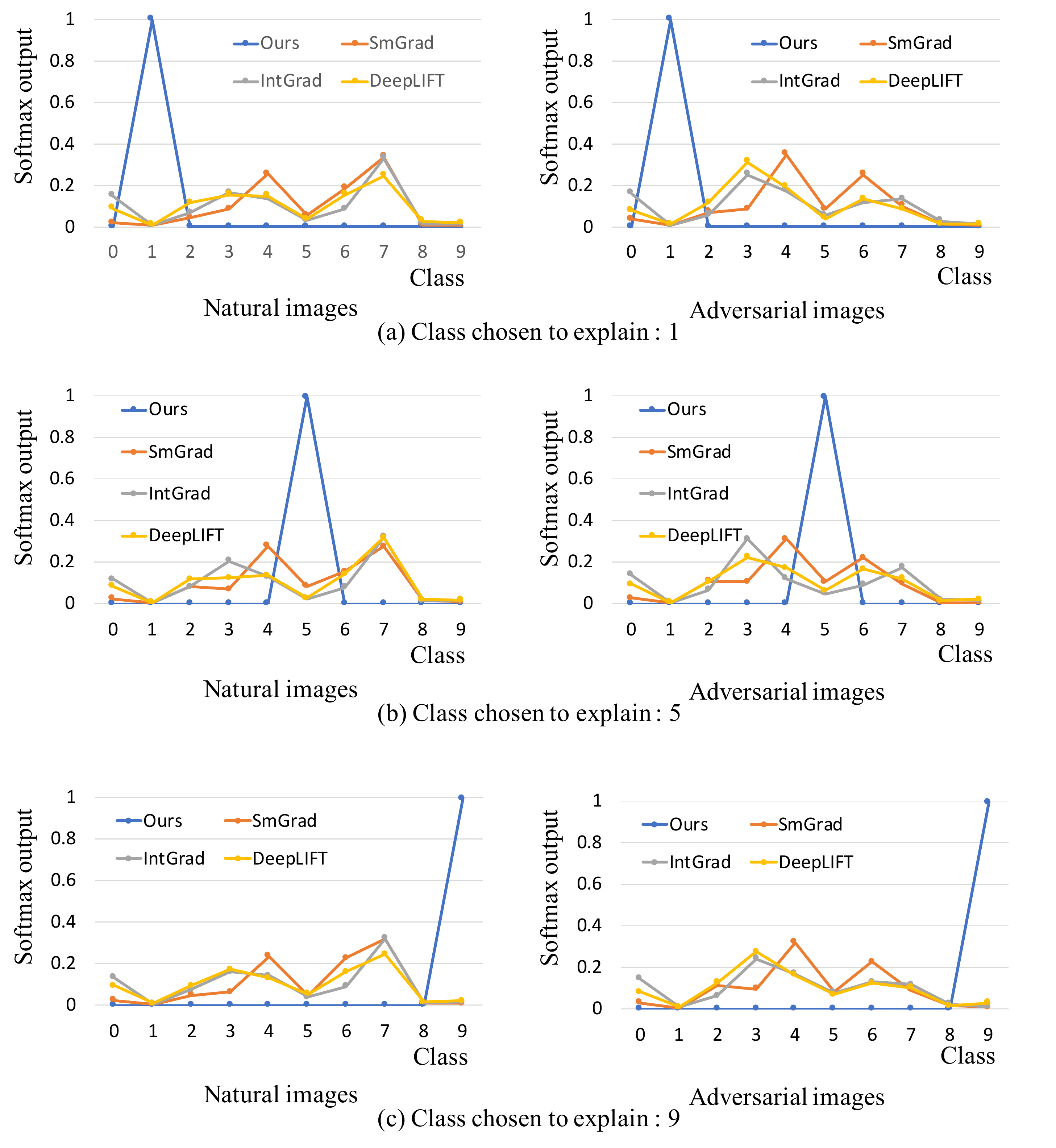}
    \end{center}
    \caption{
        Explaining non-target classes that are arbitrarily chosen.
        The horizontal axes of the plots enumerate all classes in the \textbf{CIFAR-10 dataset} on the \textbf{natural ResNet-18} and the vertical ones correspond to the softmax scores of all the classes for a given explanation of the arbitrary classes.
    }
    \label{figs_supp_cifar10}
\end{figure}

\sectionbreak

%
%
\subsection{Additional Qualitative Results}  \label{supp_qualitative_results}

We present additional qualitative results of saliency maps.
In the following figures, numbers below images represent the softmax scores of the target classes of the images.
Numbers below saliency maps represent the softmax scores of the target classes with respect to explanations corresponding to the saliency maps.
$\epsilon$ denotes $\ell_{\infty}$-norm of perturbation.
Saliency maps are best viewed zoomed-in on screen.

%
%
\subsubsection{The Results of the Untargeted Attack for the Natural ResNet-50}

\begin{figure}[H]
    \begin{center}
        \includegraphics[width=0.60\linewidth]{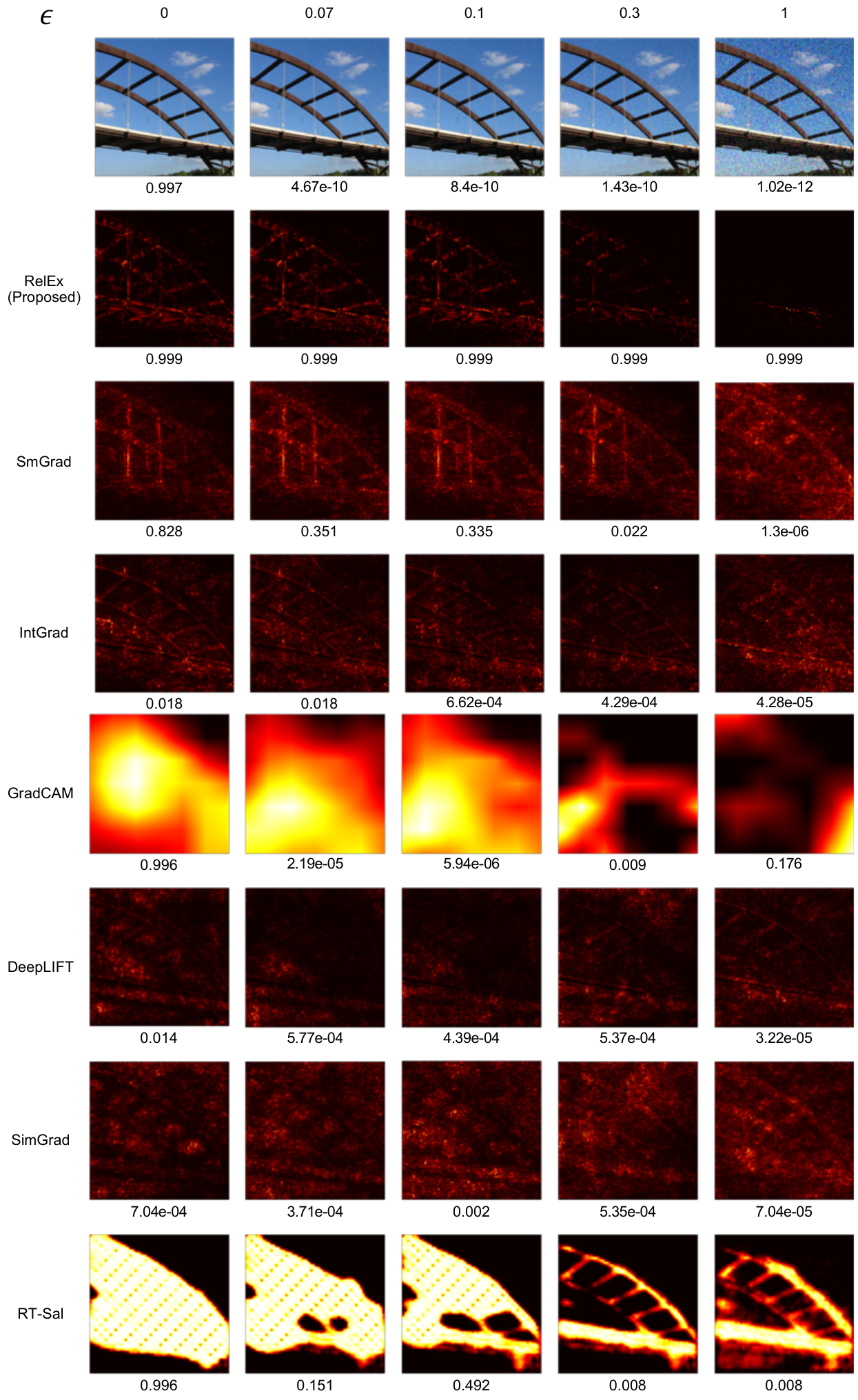}
        \caption{Qualitative results of the methods on the \textbf{natural ResNet-50} against the \textbf{untargeted attack}.} \label{fig_supp_untargeted_nat}
    \end{center}
\end{figure}

\begin{figure}[H]\ContinuedFloat
    \begin{center}
        \includegraphics[width=0.7\linewidth]{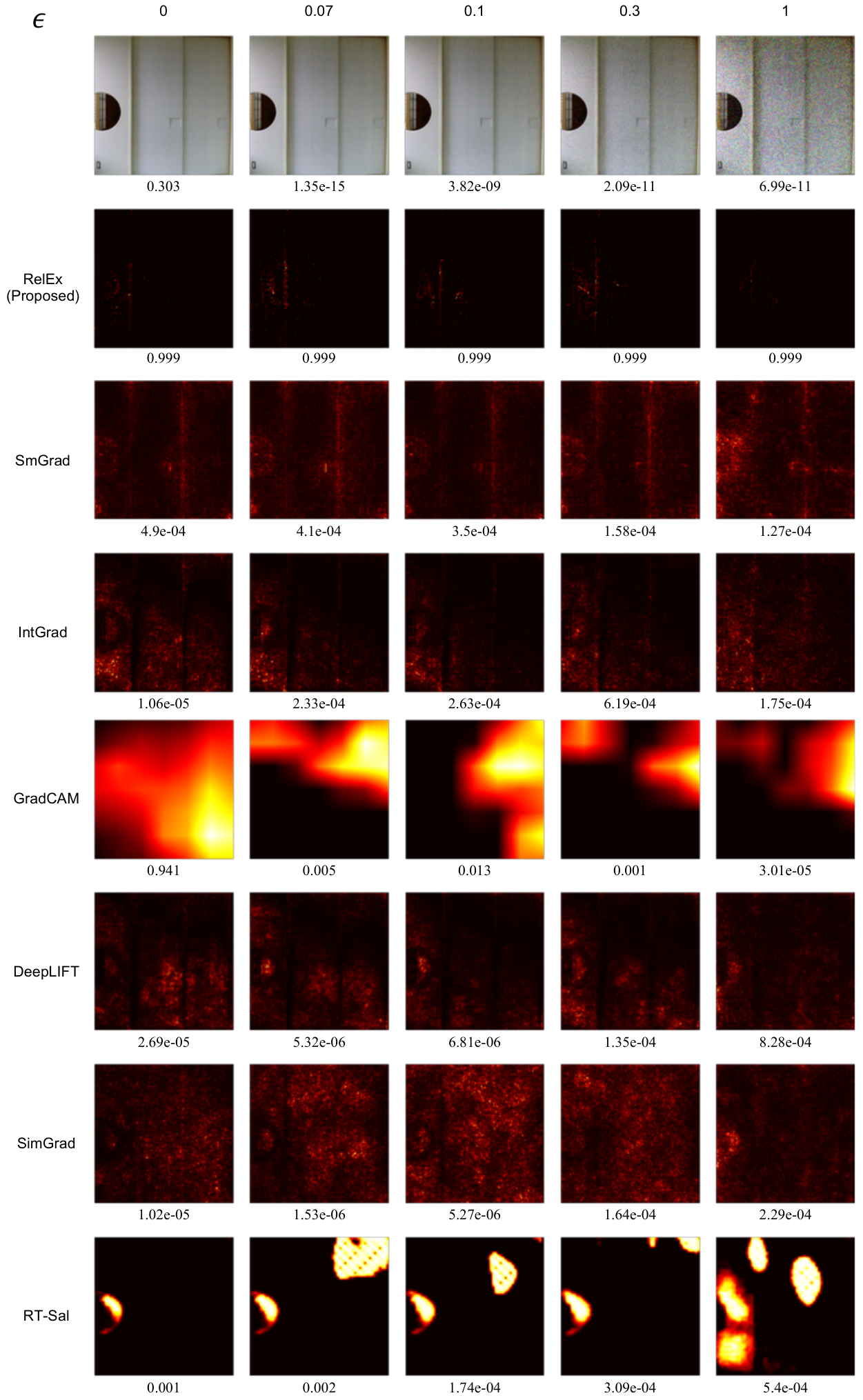}
        \caption{Qualitative results of the methods on the \textbf{natural ResNet-50} against the \textbf{untargeted attack} (continued).}
    \end{center}
\end{figure}

\begin{figure}[H]\ContinuedFloat
    \begin{center}
        \includegraphics[width=0.7\linewidth]{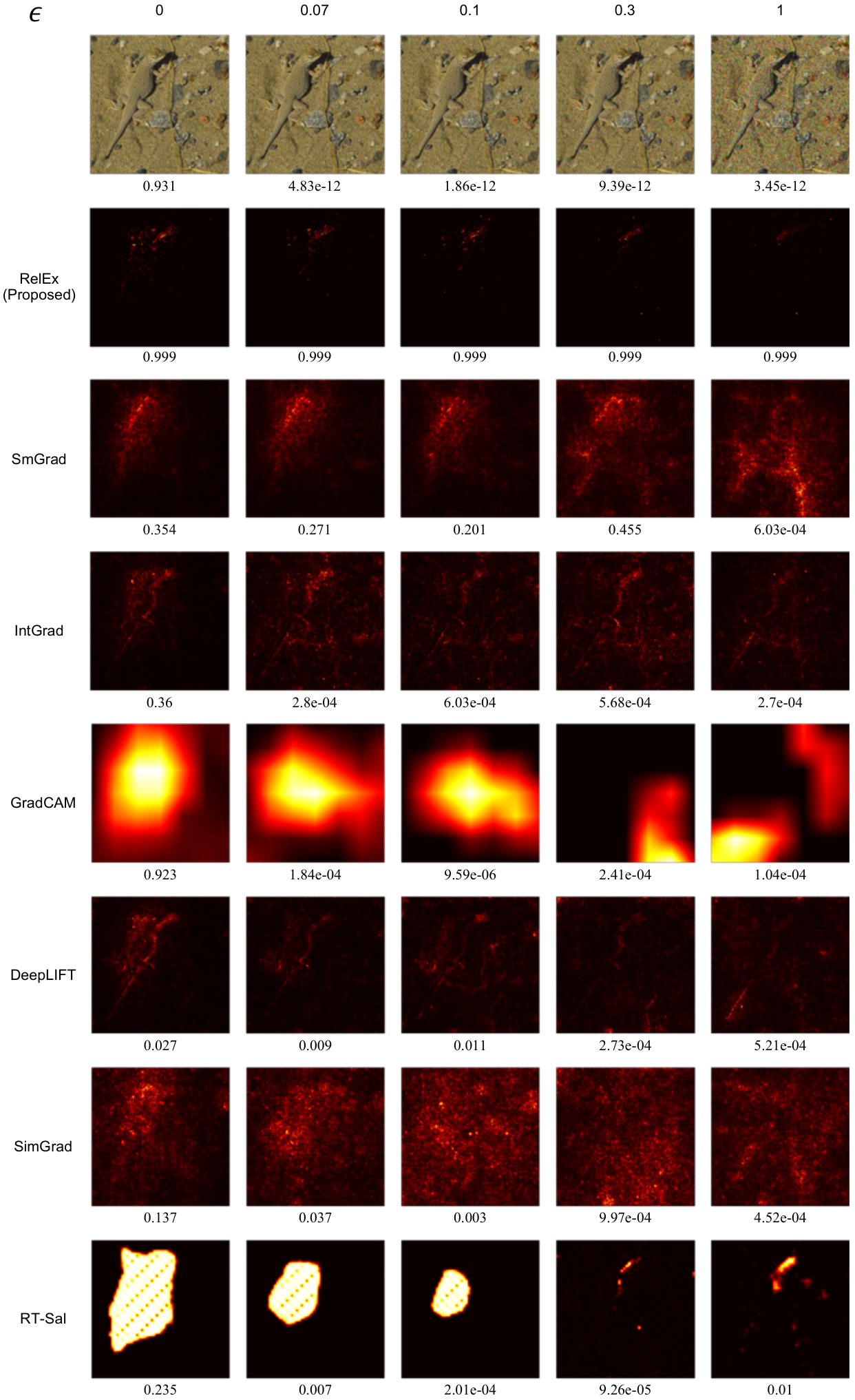}
        \caption{ Qualitative results of the methods on the \textbf{natural ResNet-50} against the \textbf{untargeted attack} (continued). }
    \end{center}
\end{figure}

\sectionbreak

%
%
\subsubsection{The Results of the Untargeted Attack against the Robust ResNet-50}

\begin{figure}[H]
    \begin{center}
        \includegraphics[width=0.7\linewidth]{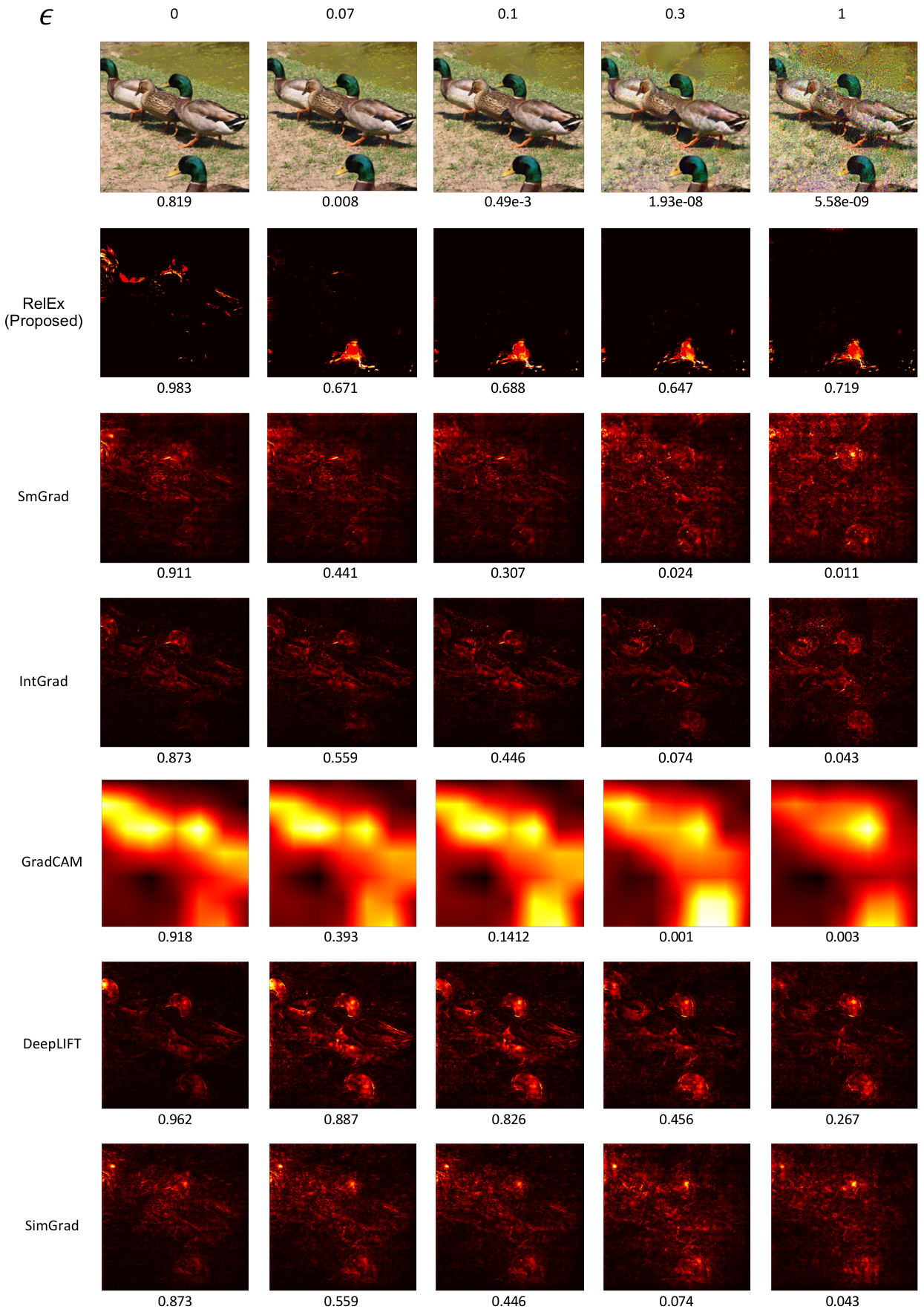}
        \caption{ Qualitative results of the methods on the \textbf{robust ResNet-50} against the \textbf{untargeted attack}. } \label{fig_supp_untargeted_robust}
    \end{center}
\end{figure}

\begin{figure}[H]\ContinuedFloat
    \begin{center}
        \includegraphics[width=0.7\linewidth]{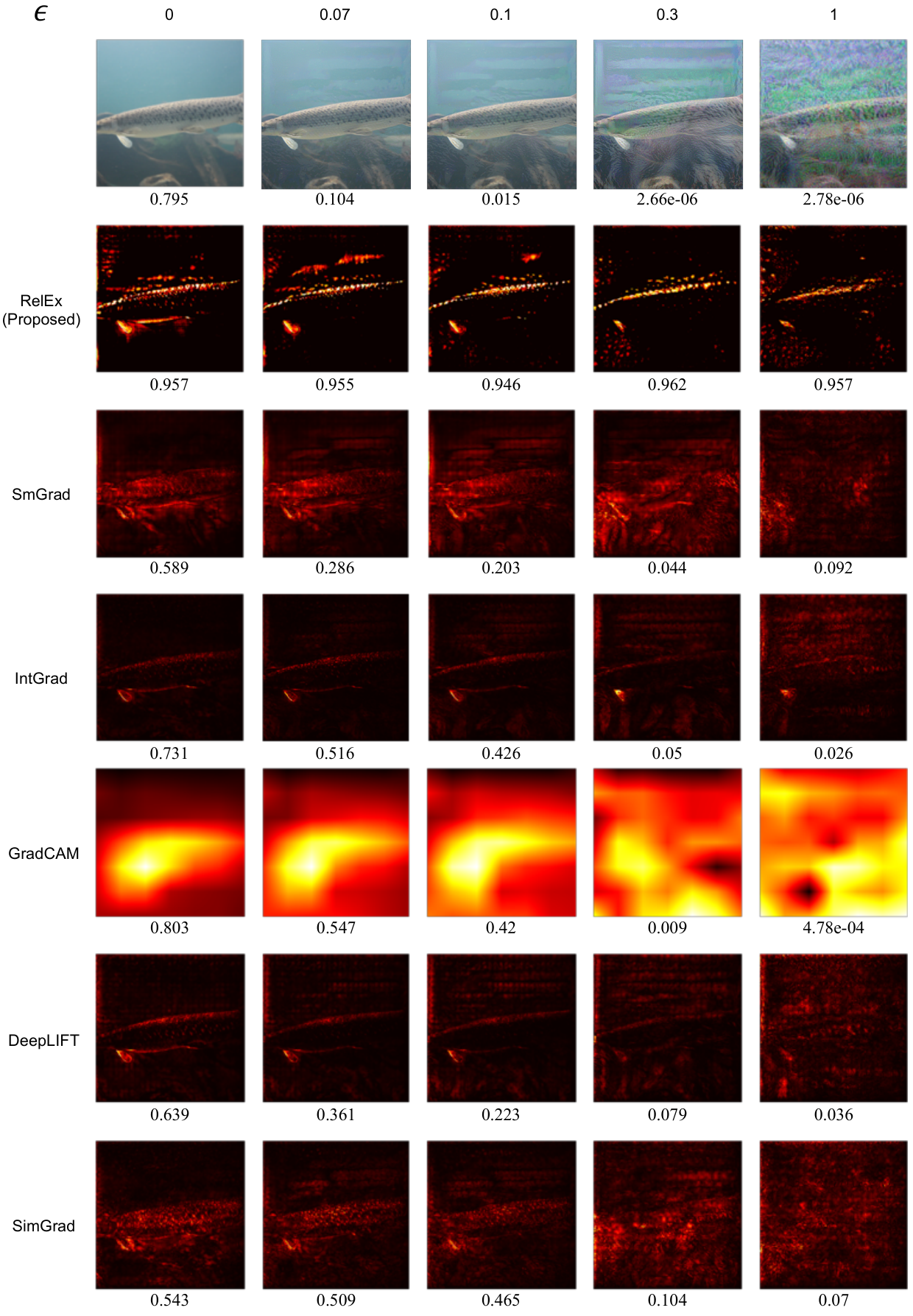}
        \caption{ Qualitative results of the methods on the \textbf{robust ResNet-50} against the \textbf{untargeted attack} (continued). }
    \end{center}
\end{figure}

\begin{figure}[H]\ContinuedFloat
    \begin{center}
        \includegraphics[width=0.7\linewidth]{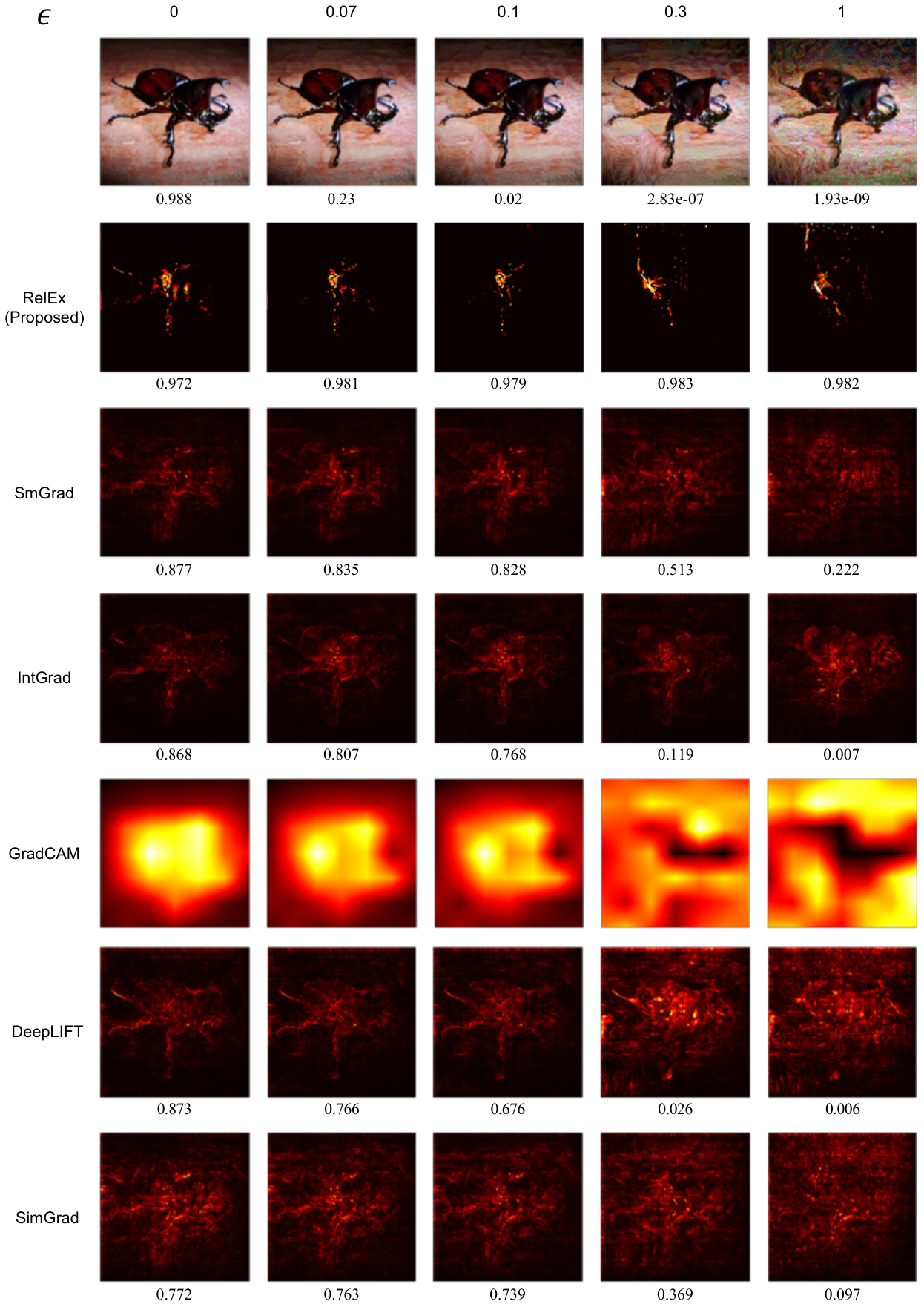}
        \caption{ Qualitative results of the methods on the \textbf{robust ResNet-50} against the \textbf{untargeted attack} (continued). }
    \end{center}
\end{figure}

\sectionbreak

\sectionbreak
%
%
\subsubsection{The Results of the Targeted Attacks on the Natural ResNet-50: Structured Attack}

The structure attack aims to change the saliency of an \emph{original} image to that of the \emph{target} image.
We use the "cat" picture as the target image for all examples below.
The first and the second row represent saliency maps of the target and the original images by each method, respectively.
We applied the structured attack to SmGrad, IntGrad, GradCAM, and RT-Sal (the red box), creating adversarial images for each method being attacked.
Then, we extracted explanations of the adversarial images against a method by using other methods (the blue box) as shown below.
Our method RelEx generates consistent saliency maps against the attacks to all the methods unlike other methods.

\begin{figure}[H]
    \begin{center}
        \includegraphics[width=\linewidth]{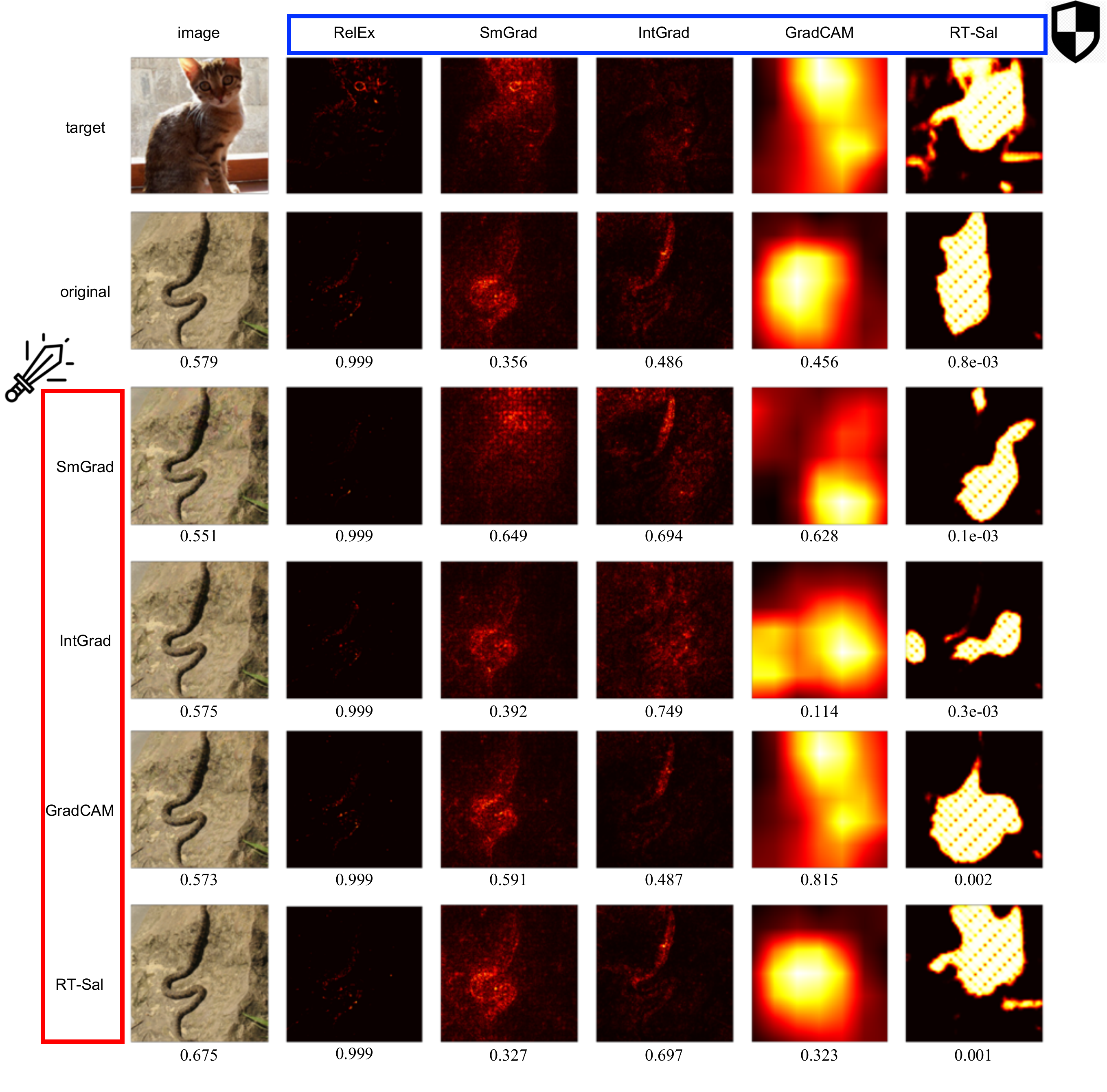}
        \caption{Qualitative results of the methods on the \textbf{natural ResNet-50} against the \textbf{structured attack}. } \label{fig_supp_structured_nat}
    \end{center}
\end{figure}

\begin{figure}[H]\ContinuedFloat
    \begin{center}
        \includegraphics[width=\linewidth]{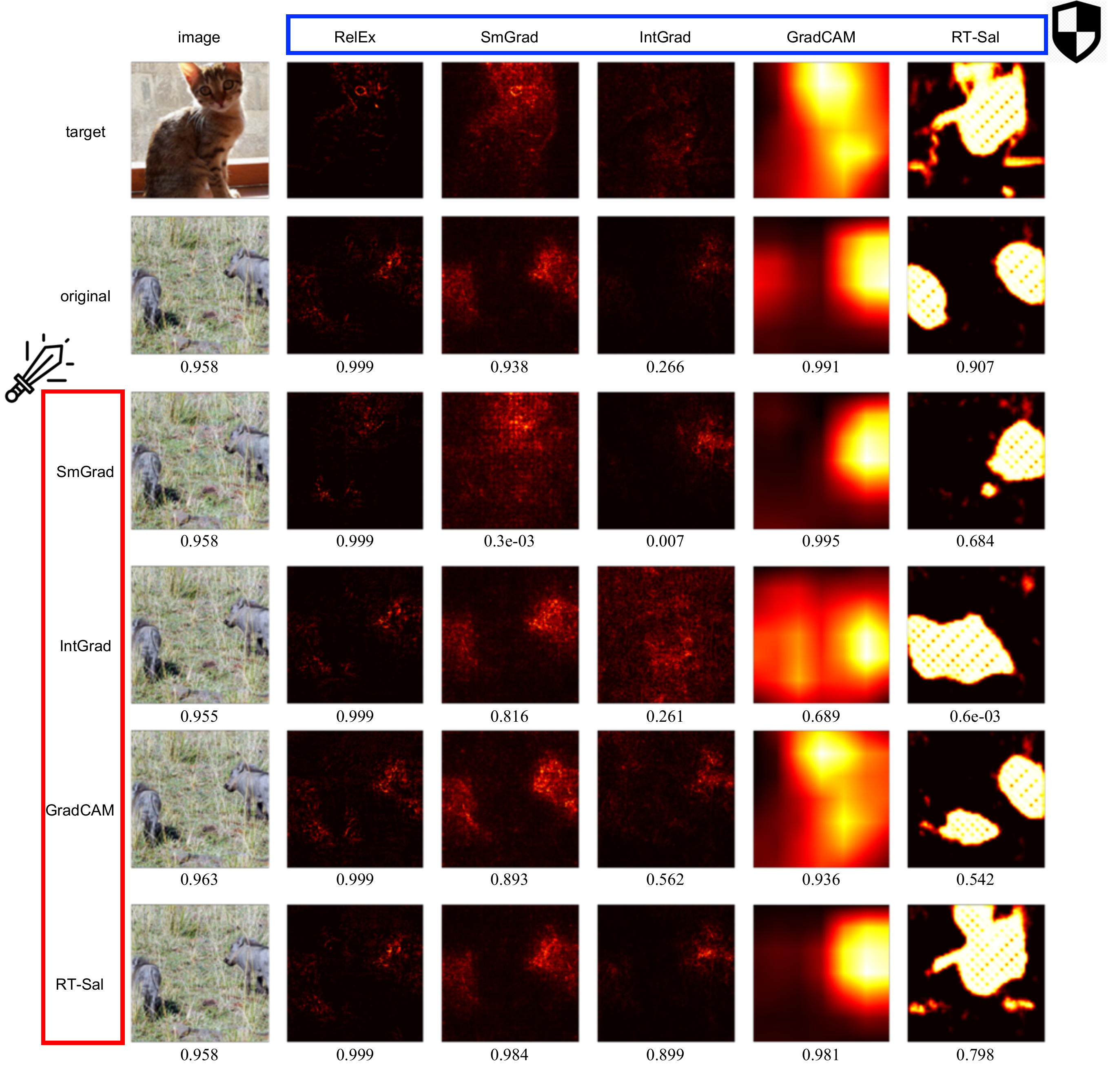}
        \caption{Qualitative results of the methods on the \textbf{natural ResNet-50} against the \textbf{structured attack} (continued). }
    \end{center}
\end{figure}

\begin{figure}[H]\ContinuedFloat
    \begin{center}
        \includegraphics[width=\linewidth]{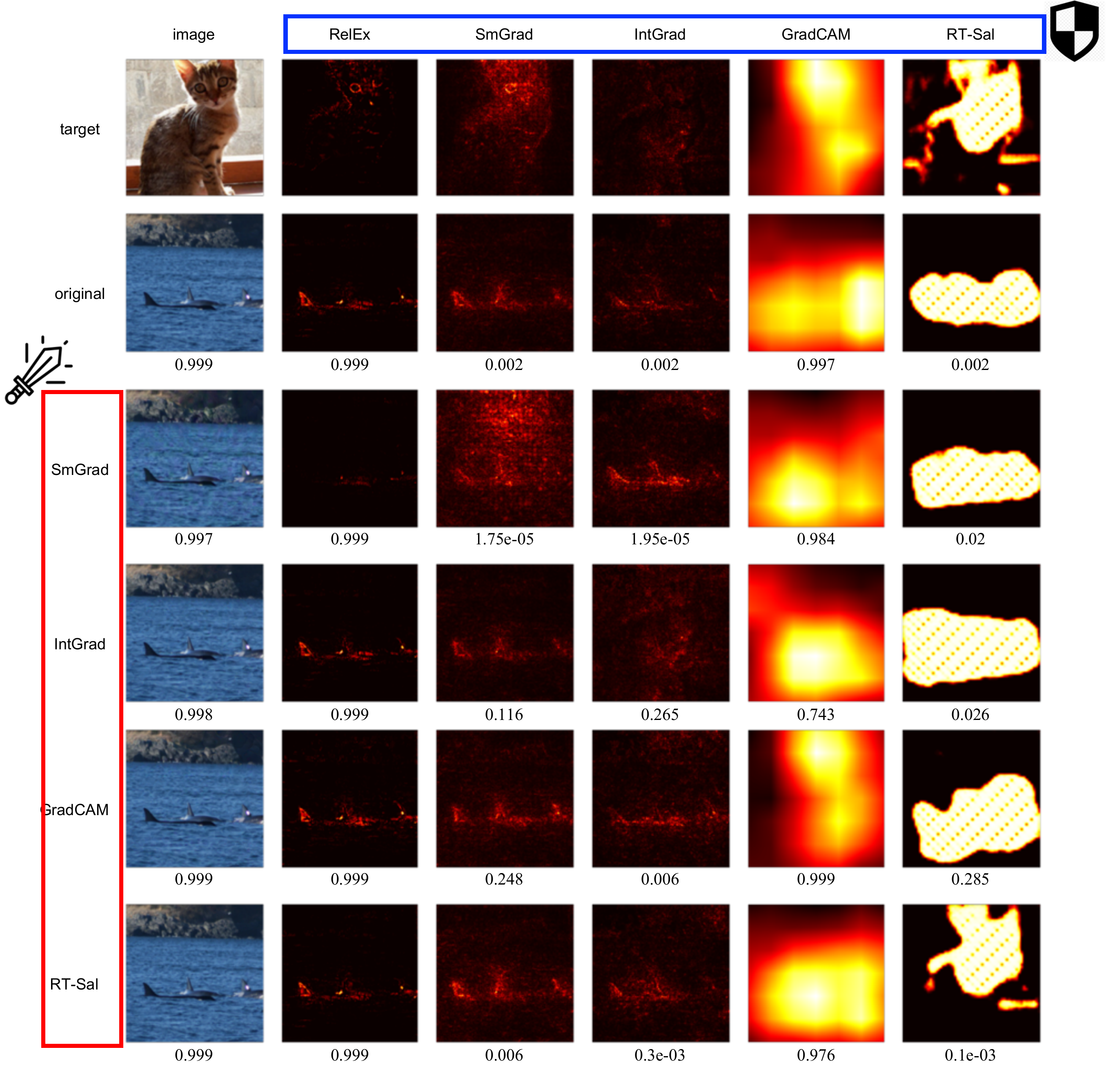}
        \caption{Qualitative results of the methods on the \textbf{natural ResNet-50} against the \textbf{structured attack} (continued). }
    \end{center}
\end{figure}

\sectionbreak
%
%
\subsubsection{The Results of the Targeted Attack against the Natural ResNet-50: Unstructured Attack}

The unstructure attack aims to change the saliency of an original image to that of the \emph{target} image.
We applied the unstructured attack to SmGrad, IntGrad, GradCAM, and RT-Sal (the red box), creating adversarial images for each method being attacked.
We chose one method being attacked, of which saliency maps appear to change significantly.
Then, we extracted explanations of the adversarial images against the selected method by using other methods (the blue box) as shown below.
We present the selected method at the bottom of each figure.
Our method RelEx generates consistent saliency maps with high target class scores of corresponding explanations unlike other methods.

\begin{figure}[H]
    \begin{center}
        \includegraphics[width=0.55\linewidth]{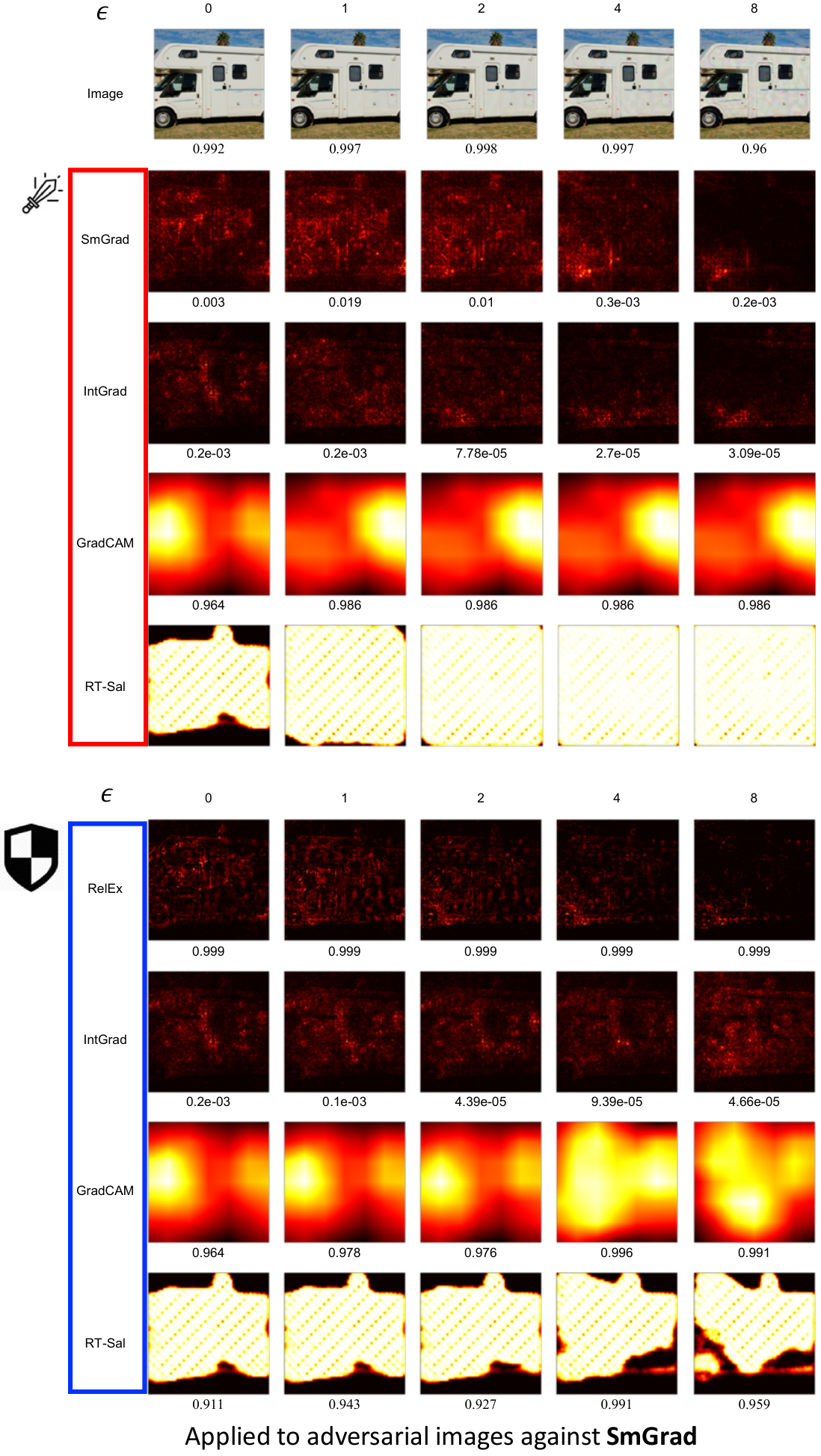}
        \caption{ Qualitative results of the methods on the \textbf{natural ResNet-50} against the \textbf{unstructured attack}. }
    \end{center}
\end{figure}

\begin{figure}[H]\ContinuedFloat
    \begin{center}
        \includegraphics[width=0.7\linewidth]{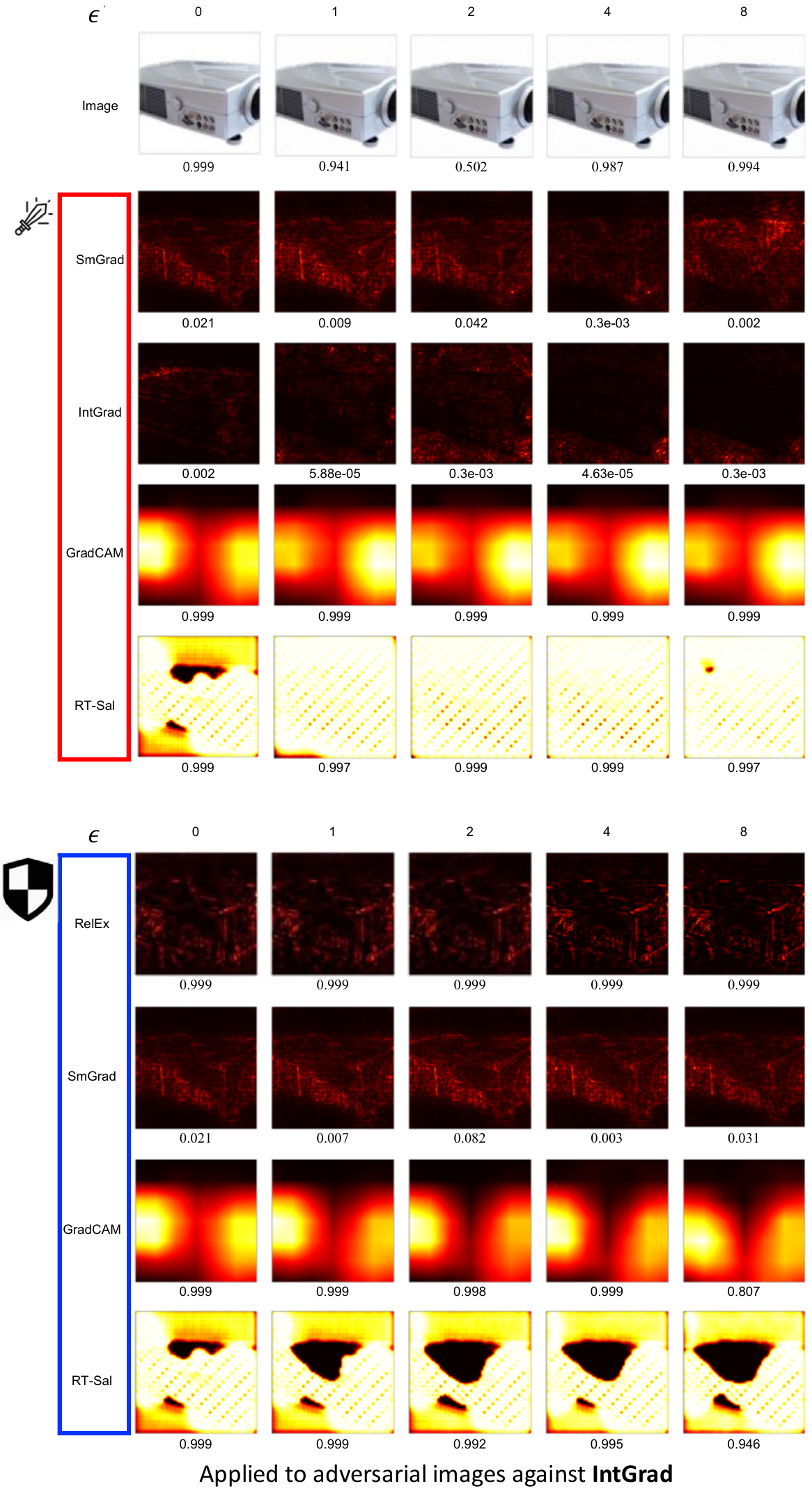}
        \caption{ Qualitative results of the methods on the \textbf{natural ResNet-50} against the \textbf{unstructured attack} (continued). }
    \end{center}
\end{figure}

\begin{figure}[H]\ContinuedFloat
    \begin{center}
        \includegraphics[width=0.7\linewidth]{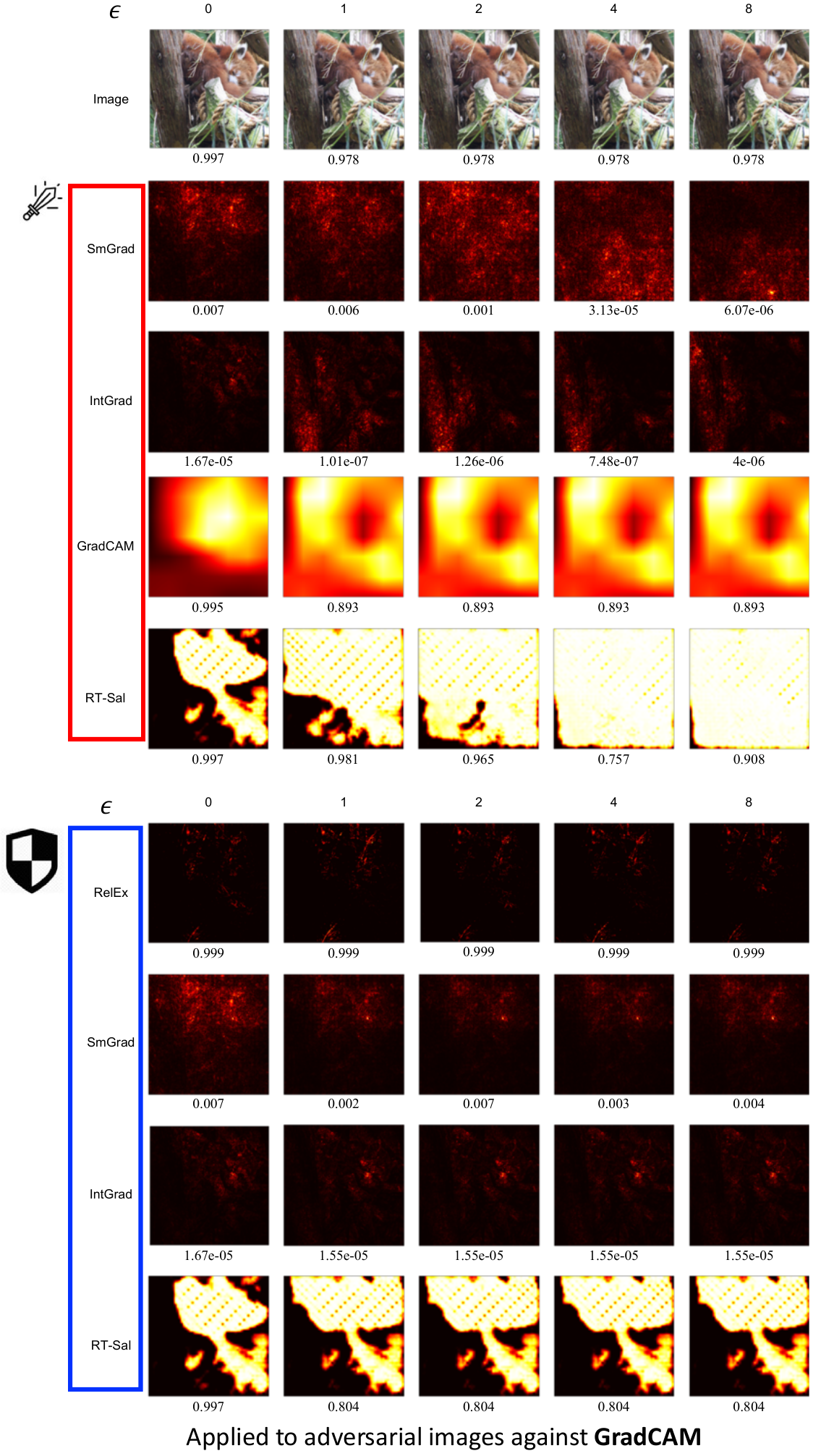}
        \caption{ Qualitative results of the methods on the \textbf{natural ResNet-50} against the \textbf{unstructured attack} (continued). }
    \end{center}
\end{figure}

{\small
\bibliographystyle{ieee_fullname}
\bibliography{manuscript}
}